\useunder{\uline}{\ul}{}
\newtheorem{theorem}{Theorem}
\definecolor{mygray}{gray}{.9}
\definecolor{LightCyan}{rgb}{0.88,1,1}
\newcommand{\nop}[1]{}
\newcommand{\model}{{nPUGraph}\xspace}
\title{Noisy Positive-Unlabeled Learning with Self-Training for \\ Speculative Knowledge Graph Reasoning}
\author{Ruijie Wang, Baoyu Li, Yichen Lu, Dachun Sun, Jinning Li, Yuchen Yan, \\ {\bf Shengzhong Liu}, {\bf Hanghang Tong}, {\bf Tarek F. Abdelzaher} \\
  University of Illinois Urbana-Champaign, IL, USA \\
   \texttt{ \small \{ruijiew2,baoyul2,yichen14,dsun18,jinning4,yucheny5,sl29,htong,zaher\}@illinois.edu}
}
\begin{document}
\maketitle

\begin{abstract}

This paper studies speculative reasoning task on real-world knowledge graphs (KG) that contain both \textit{false negative issue} (i.e., potential true facts being excluded) and \textit{false positive issue} (i.e., unreliable or outdated facts being included). State-of-the-art methods fall short in the speculative reasoning ability, as they assume the correctness of a fact is solely determined by its presence in KG, making them vulnerable to false negative/positive issues. The new reasoning task is formulated as a noisy Positive-Unlabeled learning problem. We propose a variational framework, namely \model, that jointly estimates the correctness of both collected and uncollected facts  (which we call \textit{label posterior}) and updates model parameters during training. The label posterior estimation facilitates speculative reasoning from two perspectives. First, it improves the robustness of a label posterior-aware graph encoder against false positive links. Second, it identifies missing facts to provide high-quality grounds of reasoning. They are unified in a simple yet effective self-training procedure. Empirically, extensive experiments on three benchmark KG and one Twitter dataset with various degrees of false negative/positive cases demonstrate the effectiveness of \model. 

\end{abstract}
\maketitle
\section{Introduction}
\label{sec:intro}
Knowledge graphs (KG), which store real-world facts in triples (\textit{head entity}, \textit{relation}, \textit{tail entity}), have facilitated a wide spectrum of knowledge-intensive applications~\cite{wang2018ace,KBQA,KG4Social,RippleNet,wang2022rete}. Automatically reasoning facts based on observed ones, a.k.a. Knowledge Graph Reasoning (KGR)~\cite{TransE}, becomes increasingly vital since it allows for expansion of the existing KG at a low cost.


\begin{figure}
    \centering
    \includegraphics[width = 1.\linewidth]{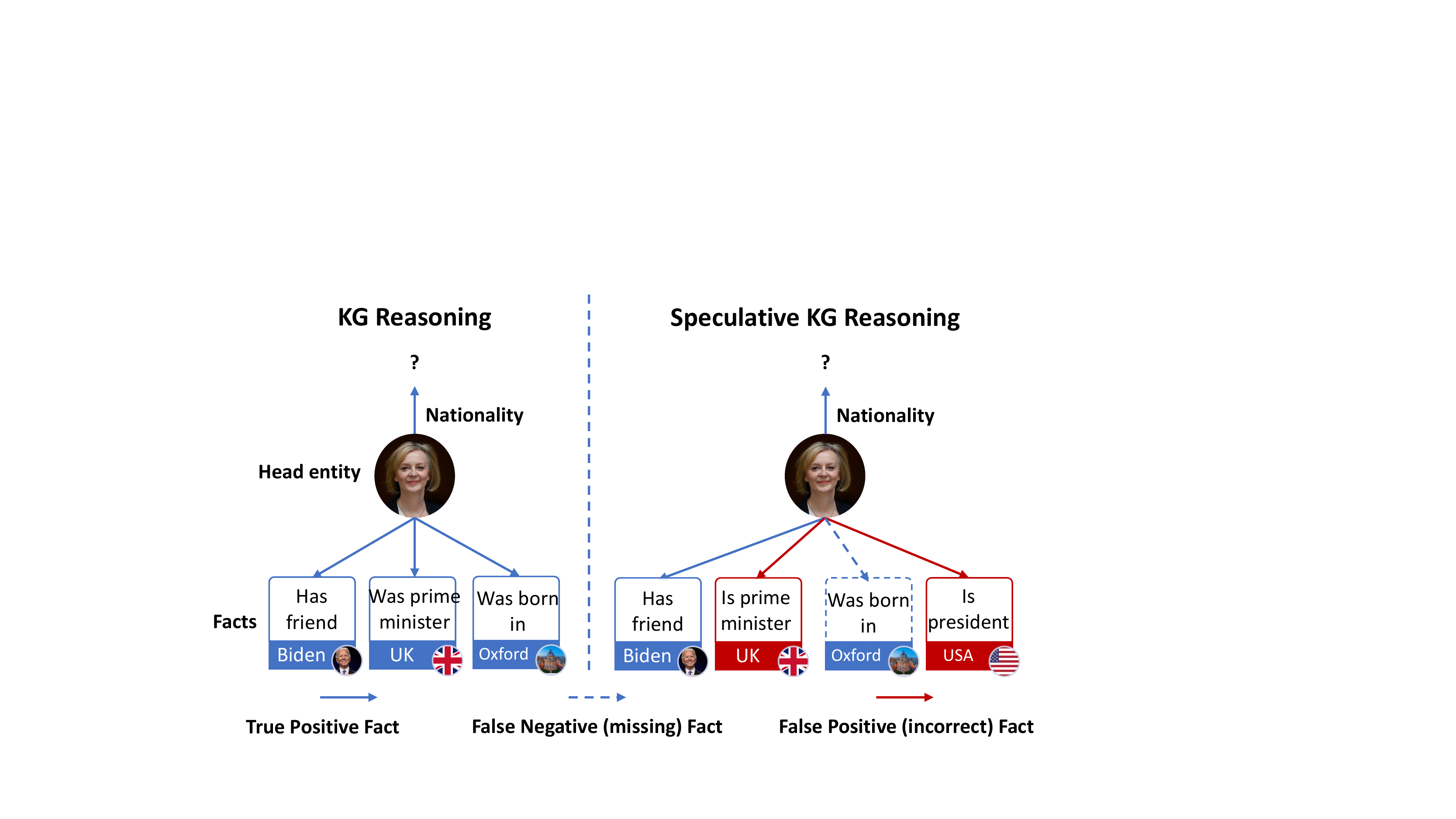}
    \caption{An illustrative example of speculative KG reasoning. Blue solid lines denote the true positive fact, blue dashed lines denote the \textit{false negative} (missing) fact, and red solid lines denote the \textit{false positive} (incorrect) fact. We aim to mitigate the false negative/positive issues to enable speculative KG reasoning.}
    \label{fig:task}
\end{figure}


Numerous efforts have been devoted to KGR task~\cite{TransE,TransR,ComplEX,RotatE,RE-GCN}, which assume the correctness of a fact is solely determined by its presence in KG. They ideally view facts included in KG as positive samples and excluded facts as negative samples. However, most real-world reasoning has to be performed based on sparse and unreliable observations, where there may be true facts excluded or false facts included. Reasoning facts based on sparse and unreliable observations (which we call {\em speculative KG reasoning\/}) are still underexplored.

In this paper, we aim to enable the speculative reasoning ability on real-world KG. The fulfillment of the goal needs to address two commonly existing issues, as shown in Figure~\ref{fig:task}: 1) \textbf{The false negative issue} (i.e., sparse observation): Due to the graph incompleteness, facts excluded from the KG can be used as implicit grounds of reasoning. This is particularly applicable to non-obvious facts. For example, personal information such as the birthplace of politicians may be missing when constructing a political KG, as they are not explicitly stated in the political corpus~\cite{PUDA}. However, it can be critical while reasoning personal facts like nationality. 2) \textbf{The false positive issue} (i.e., noisy observation): Facts included in the KG may be unreliable and should not be directly grounded without inspection. It can happen when relations between entities are incorrectly collected or when facts are extracted from outdated or unreliable sources. For example, Mary Elizabeth is no longer  the Prime Minister of the United Kingdom, which may affect the reasoning accuracy of her current workplace. These issues generally affect both one-hop reasoning~\cite{TransE} and multi-hop reasoning~\cite{KBQA}. The main focus of this paper is investigating the one-hop speculative reasoning task as it lays the basis for complicated multi-hop reasoning capability. 



Speculative KG reasoning differs from conventional KG reasoning in that the correctness of each collected/uncollected fact needs to be dynamically estimated as part of the learning process, such that the grounds of reasoning can be accordingly calibrated. 
Unfortunately, most existing work, if not all, lacks such inspection capability. Knowledge graph embedding methods~\cite{TransE,TransR,DistMult,ComplEX,RotatE} and graph neural network (GNN) methods~\cite{R-GCN,ConvE,ConvKB,CompGCN,RE-GCN} can easily overfit the false negative/positive cases because of their training objective that ranks the collected facts higher than other uncollected facts in terms of plausibility. Recent attempts on uncertain KG~\cite{UKGE,GTransE} measure the uncertainty scores for facts, which can be utilized to detect false negative/positive samples. However, they explicitly require the ground truth uncertainty scores as supervision for reasoning model training, which are usually unavailable in practice. 
Motivated by these observations, we formulate the speculative KG reasoning task as a noisy Positive-Unlabeled learning problem. The facts contained in the KG are seen as noisy positive samples with a certain level of label noise, and the facts excluded from the KG are treated as unlabeled samples, which include both negative ones and possible factual ones. 
Instead of determining the correctness of facts before training the reasoning model without inspection, we learn the two perspectives in an end-to-end training process.
To this end, we propose \model, a novel variational framework that regards the underlying correctness of collected/uncollected facts in the KG as latent variables for the reasoning process. We jointly update model parameters and estimate the posterior likelihood of the correctness of each collected/uncollected fact ({referred to as \em label posterior\/}), through maximizing a theoretical lower bound of the log-likelihood of each fact being collected or uncollected. 

The estimated label posterior further facilitates the speculative KG reasoning from two aspects: 1) It removes false positive facts contained in KG and improves the representation quality. We accordingly propose a label posterior-aware encoder to incorporate information only from entity neighbors induced by facts with a high posterior probability, under the assumption that the true positive facts from the collected facts provide more reliable information for reasoning. 2) It complements the grounds of reasoning by selecting missing but possibly plausible facts with high label posterior, which are iteratively added to acquire more informative samples for model training. These two procedures are ultimately unified in a simple yet effective self-training strategy that alternates between the  \textit{data sampling based on latest label posteriors} and the \textit{model training based on latest data samples}. Empirically, \model outperforms eleven state-of-the-art baselines on three benchmark KG data and one Twitter data we collected by large margins. Additionally, its robustness is demonstrated in speculative reasoning on data with multiple ratios of false negative/positive cases. 



Our contributions are summarized as follows: (1) We open up a practical but underexplored problem space of speculative KG reasoning, and formulate it as a noisy Positive-Unlabeled learning task; (2) We take the first step in tackling this problem by proposing a variational framework \model to jointly optimize reasoning model parameters and estimate fact label posteriors; (3) We propose a simple yet effective self-training strategy for \model to simultaneously deal with false negative/positive issues; (4) We perform extensive evaluations to verify the effectiveness of \model on both benchmark KG and Twitter interaction data with a wide range of data perturbations. 
 \section{Preliminaries}

\subsection{Speculative Knowledge Graph Reasoning} 

A knowledge graph (KG) is denoted as $\mathcal{G} = \{(e_h, r, e_t) \} \subseteq \mathcal{S}$, where $\mathcal{S} = \mathcal{E} \times \mathcal{R} \times \mathcal{E}$ denotes triple space, $\mathcal{E}$ denotes the entity set, $\mathcal{R}$ denotes the relation set. Each triple $s = (e_h, r, e_t)$ refers to that a head entity $e_h \in \mathcal{E}$ has a relation $r \in \mathcal{R}$ with a tail entity $e_t \in \mathcal{E}$. Typically, a score function $\psi(s;\Theta)$, parameterized by $\Theta$, is designed to measure the plausibility of each potential triple $s = (e_h, r, e_t)$, and to rank the most plausible missing ones to complete KG during inference~\cite{TransE,RotatE}. The goal of speculative KG reasoning is to infer the most plausible triple for each incomplete triple $(e_h, r, e_?)~\text{or}~(e_?, r, e_t)$ given by sparse and unreliable observations in $\mathcal{G}$. In addition, it requires correctness estimation for each potential fact collected or uncollected by $\mathcal{G}$. 



\subsection{Noisy Positive-Unlabeled Learning}
Positive-Unlabeled (PU) learning is a learning paradigm for training a model when only positive and unlabeled data is available~\cite{PULearning}. We formulate the speculative KG reasoning task as a noisy Positive-Unlabeled learning problem, where the positive set contains potentially label noise from false facts~\cite{nPU}.

\noindent \textbf{PU Triple Distribution}. For the speculative KG reasoning task, we aim to learn a binary classifier that maps a triple space $\mathcal{S}$ to a label space $\mathcal{Y}=\{0,1\}$. Data are split as labeled (collected)\footnote{In this paper, we interchangeably use the term \textit{labeled/unlabeled} and \textit{collected/uncollected} with no distinction.} triples $s^l\in\mathcal{S}^L$ and unlabeled (uncollected) triples $s^u\in\mathcal{S}^U$. The labeled triples are considered noisy positive samples with a certain level of label noise. The distribution of labeled triples can be represented as follows:

\begin{equation}
    \small
    s^l \sim \beta \phi_1^l(s^l) + (1-\beta) \phi_0^l(s^l),
\end{equation}
\noindent
where $\phi_y^l$ denotes the probability of being collected over triple space $\mathcal{S}$ for the positive class ($y=1$) and negative class ($y=0$), and $\beta\in [0,1)$ denotes the proportion of true positive samples in labeled data. Unlabeled triples include both negative samples and possible factual samples. The distribution of unlabeled samples can be represented as follows:
\begin{equation}
    \small
    s^u \sim \alpha \phi_1^u(s^u) + (1-\alpha) \phi_0^u(s^u),
\end{equation}
\noindent
where $\phi_y^u = 1- \phi_y^l$ denotes the probability of being uncollected, $\alpha\in [0,1)$ denotes the positive class prior, i.e., the proportion of positive samples in unlabeled data.

\noindent \textbf{PU Triple Construction}. We then discuss the construction of $\mathcal{S}^L$ and $\mathcal{S}^U$ based on the collected KG $\mathcal{G}$. Triples in $\mathcal{G}$ naturally serve as labeled samples with a ratio of noise, i.e., $\mathcal{S}^L = \mathcal{G}$. For unlabeled set $\mathcal{S}^U$, However, directly using $\mathcal{S}\setminus \mathcal{G}$ as the unlabeled set $\mathcal{S}^U$ would result in too many unlabeled samples for training due to the large number of possible triples in triple space $\mathcal{S}$. Following~\cite{PUDA}, we construct $\mathcal{S}^U$ as follows: For each labeled triple $s_i^l = (e_h, r, e_t)$, we construct $K$ unlabeled triples $s_{ik}^u$ by replacing the head and tail respectively with other entities: $s_{ik}^u = (e_h, r, e_k^{-})~\text{or}~(e_k^{-}, r, e_t)$, where $e_k^{-}$ is the selected entity that ensures $s_{ik}^u \notin \mathcal{S}^L$. Initially, the construction can be randomized. During the training process, it is further improved by selecting unlabeled samples with high label posterior in a self-training scheme, so as to cover positive samples in the unlabeled set to the greatest extent.

\section{Methodology}

\begin{figure}[t]
    \centering
    \includegraphics[width = 1.\linewidth]{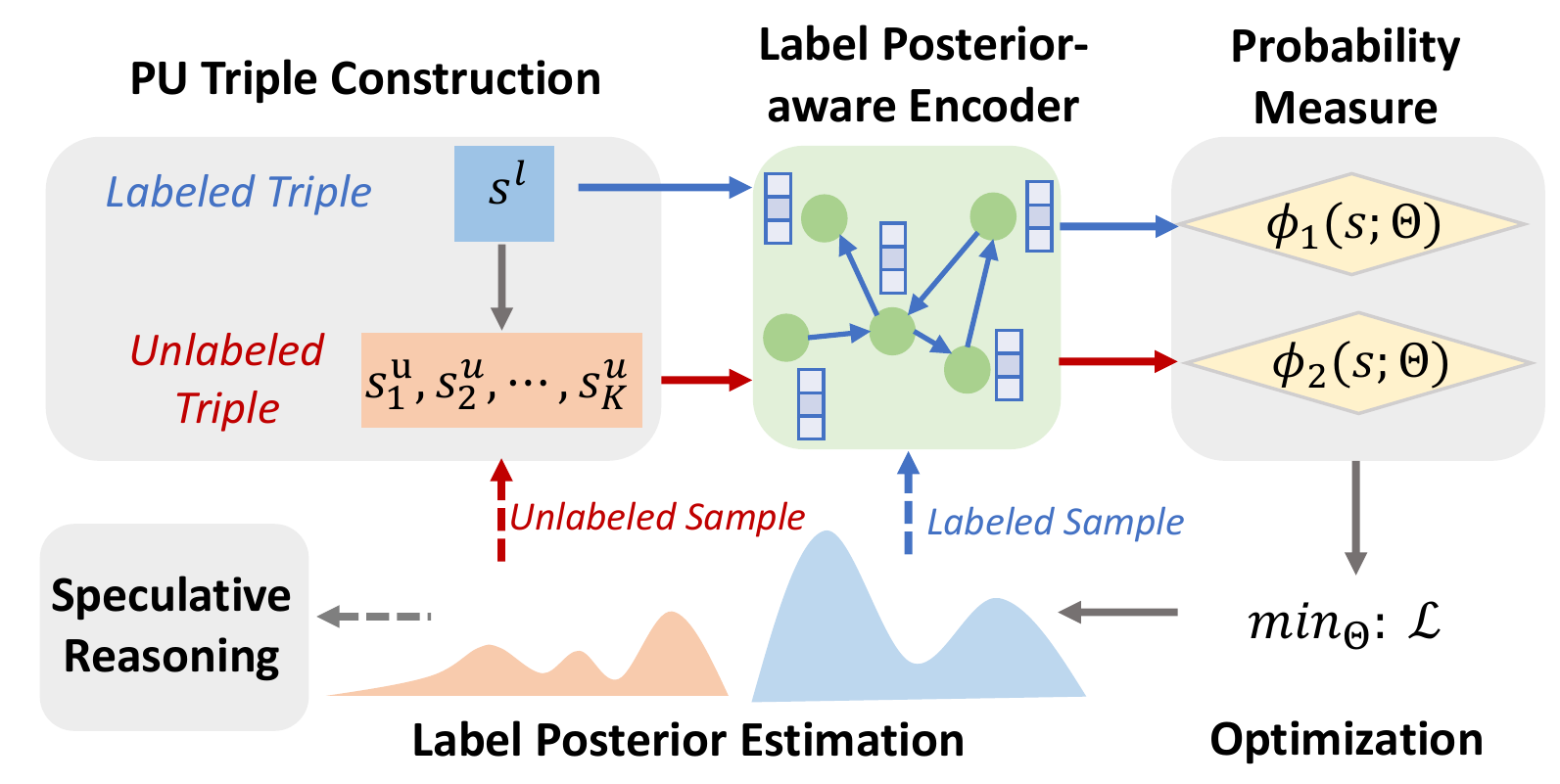}
    \caption{\model overview. It jointly optimizes parameters and estimates label posterior, to detect false negative/positive cases for the encoder and self-training.}
    \label{fig:framework}
\end{figure}

\subsection{Overview}
 Our approach views underlying triple labels (positive/negative) as latent variables, influencing the  collection probability. Unlike the common objective of reasoning training that ranks the plausibility of the collected triples higher than uncollected ones, we instead maximize the data likelihood of each potential triple being collected or not. To this end, as shown in Figure~\ref{fig:framework}, we propose \model framework to jointly optimize parameters and infer the label posterior. During the training process, the latest label posterior estimation can be utilized by a label posterior-aware encoder, which improves the quality of representation learning by only integrating information from the entity neighbors induced by true facts. Finally, a simple yet effective self-training strategy based on label posterior is proposed, which can dynamically update neighbor sets for the encoder and sample unlabeled triplets to cover positive samples in the unlabeled set to the greatest extent for model training.

 The remaining of this section is structured as follows: Section~\ref{sec:noisyPU} first formalizes the learning objective and the variational framework for likelihood maximization. Section~\ref{sec:encoder} details the label posterior-aware encoder for representation learning, followed by Section~\ref{sec:st} that introduces the self-training strategy.

\subsection{Noisy PU Learning on KG}
\label{sec:noisyPU}

Due to the false negative/positive issues, the correctness of a fact ($y$) is not solely determined by its presence in a knowledge graph. \model addresses the issue by treating the underlying label as a latent variable that influences the probability of being collected or not. We, therefore, set maximizing the data collection likelihood as our objective. In such a learning paradigm, the assumptions that collected triples are correct $p(y=1|s^l) = 1$ and uncollected triples are incorrect $p(y=0|s^u)=1$ are removed. We aim to train a model on labeled triples $\mathcal{S}^L$ and unlabeled ones $\mathcal{S}^U$, and infer the label posterior $p(y|s^u)$ and $p(y|s^l)$ at the same time by data likelihood maximization. The latest label posterior can help to detect false negative/positive cases during model training.


We first derive our training objective. To be more formal, the log-likelihood of each potential fact being collected or not is lower bounded by Eq.~\eqref{eq:llc}, which is given by Theorem~\ref{the:llc}. 


\begin{theorem}
\label{the:llc}

The log-likelihood of the complete data $\log p(\mathbf{S})$ is lower bounded as follows:
\begin{equation}
\scriptsize
    \begin{split}
        \cr &\log p(\mathbf{S})\geq \underset{q(\mathbf{Y})}{\mathbb{E}} \left[\log p(\mathbf{S}|\mathbf{Y})\right] - \mathbb{KL}(q(\mathbf{Y})\| p(\mathbf{Y})) \
        \cr& = \underset{s^l\in\mathcal{S}^L}{\mathbb{E}}\left[w^{l}\log[\phi_1^l(s^l)]+(1-w^{l})\log[\phi_0^l(s^l)]\right] \
        \cr& + \underset{s^u\in\mathcal{S}^U}{\mathbb{E}}\left[(w^{u}\log[\phi_1^u(s^u)]+(1-w^{u})\log[\phi_0^u(s^u)]\right] \
        \cr& - \mathbb{KL}(\mathbf{W}^U\|\mathbf{\Tilde{W}}^U) - \mathbb{KL}(\mathbf{W}^L\|\mathbf{\Tilde{W}}^L) - \frac{\|\mathbf{W}^L\|_1}{|\mathcal{S}^L|} - \frac{\|\mathbf{W}^U\|_1}{|\mathcal{S}^U|},
    \end{split}
    \label{eq:llc}
\end{equation}
\noindent
where $\mathbf{S}$ denotes all labeled/unlabeled triples, $\mathbf{Y}$ is the corresponding latent variable indicating the positive/negative labels for triples, $\mathbf{W}^U = \{w^{u}_i\}$ denotes the point-wise probability for the uncollected triples being positive, $\mathbf{W}^L = \{w^{l}_i\}$ denotes the probability for the collected triples being positive. $\mathbf{\Tilde{W}}^U$ and $\mathbf{\Tilde{W}}^U$ are the approximation of the collection probability for uncollected/collected triples respectively, produced by \model based on the latest parameters.
\end{theorem}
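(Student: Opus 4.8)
The plan is to recognize the first line of Eq.~\eqref{eq:llc} as a standard evidence lower bound and then unpack each of its two pieces under the PU triple distribution defined above. First I would write the marginal log-likelihood by summing out the latent labels, $\log p(\mathbf{S}) = \log \sum_{\mathbf{Y}} p(\mathbf{S},\mathbf{Y})$, introduce an arbitrary variational distribution $q(\mathbf{Y})$ over the labels, and apply Jensen's inequality to $\mathbb{E}_{q(\mathbf{Y})}[\log (p(\mathbf{S},\mathbf{Y})/q(\mathbf{Y}))]$. Factoring the joint as $p(\mathbf{S},\mathbf{Y}) = p(\mathbf{S}|\mathbf{Y})\,p(\mathbf{Y})$ then yields exactly the first line, namely $\log p(\mathbf{S}) \geq \mathbb{E}_{q(\mathbf{Y})}[\log p(\mathbf{S}|\mathbf{Y})] - \mathbb{KL}(q(\mathbf{Y})\|p(\mathbf{Y}))$. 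This step is routine and independent of the KG specifics.

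Next I would expand the expected conditional log-likelihood. Assuming the triples are conditionally independent given their labels, and taking $q(\mathbf{Y})$ to factorize into point-wise Bernoulli terms with parameters $w_i^l$ on collected triples and $w_i^u$ on uncollected triples, the expectation $\mathbb{E}_{q(\mathbf{Y})}[\log p(\mathbf{S}|\mathbf{Y})]$ splits into a sum over $\mathcal{S}^L$ and a sum over $\mathcal{S}^U$. Substituting the class-conditional collection probabilities — $\phi_1^l,\phi_0^l$ for labeled triples and $\phi_1^u,\phi_0^u$ for unlabeled triples — and using $\mathbb{E}_q[y_i]=w_i$ reproduces the two expectation lines of Eq.~\eqref{eq:llc} term by term.

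The crux, which I expect to be the main obstacle, is decomposing $\mathbb{KL}(q(\mathbf{Y})\|p(\mathbf{Y}))$ into the four residual terms: the divergences $\mathbb{KL}(\mathbf{W}^U\|\mathbf{\tilde{W}}^U)$ and $\mathbb{KL}(\mathbf{W}^L\|\mathbf{\tilde{W}}^L)$ together with the two $\ell_1$ penalties $\|\mathbf{W}^L\|_1/|\mathcal{S}^L|$ and $\|\mathbf{W}^U\|_1/|\mathcal{S}^U|$. The two divergence terms follow once the prior $p(\mathbf{Y})$ is chosen to factorize with Bernoulli parameters equal to the model's current approximations $\tilde{w}_i$, since the KL between two product-Bernoulli laws is additive and separates cleanly over the collected and uncollected blocks. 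The $\ell_1$ terms, however, cannot come from a vanilla Bernoulli prior; I would obtain them by augmenting the prior with a sparsity-promoting factor (for instance, one proportional to $\exp(-y_i/|\mathcal{S}^L|)$ on the labeled block and $\exp(-y_i/|\mathcal{S}^U|)$ on the unlabeled block), whose negative expected log under $q$ is precisely $\sum_i w_i/|\mathcal{S}| = \|\mathbf{W}\|_1/|\mathcal{S}|$; this is consistent with the PU assumption that genuine positives are relatively rare. The delicate point is to verify that this augmented prior is properly normalized, that its log-normalizer contributes only a constant — so that the stated equality in Eq.~\eqref{eq:llc} is preserved up to a term that does not affect the optimization — and that the penalty separates into the two per-block averages rather than a single pooled term. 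Checking this normalization and block separation carefully is where the bulk of the work lies.
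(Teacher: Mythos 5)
Your derivation of the first line via Jensen's inequality, together with the expansion of $\mathbb{E}_{q}[\log p(\mathbf{S}|\mathbf{Y})]$ under a factorized Bernoulli $q$ and the PU mixture densities $\phi^l_y,\phi^u_y$, matches the paper's proof, which reaches the same ELBO by writing out $\mathbb{KL}(q(\mathbf{Y}|\mathbf{S})\|p(\mathbf{Y}|\mathbf{S}))$ and using its nonnegativity; likewise both you and the paper obtain $\mathbb{KL}(\mathbf{W}^L\|\mathbf{\Tilde{W}}^L)+\mathbb{KL}(\mathbf{W}^U\|\mathbf{\Tilde{W}}^U)$ by taking the prior marginals to be the current posterior estimates $\Tilde{w}$. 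Where you genuinely diverge is the origin of the two $\ell_1$ terms. You propose baking sparsity into the prior via a factor $\exp(-y_i/|\mathcal{S}|)$, and you correctly flag normalization as the delicate point: the per-triple normalizer $Z_i=\Tilde{w}_i e^{-1/|\mathcal{S}|}+(1-\Tilde{w}_i)$ depends on $\Tilde{w}_i$ and hence on the model parameters, so it is not a harmless constant; to rescue the argument one must note that $\log Z_i<0$, so dropping $-\sum_i\log Z_i$ still leaves a valid but strictly looser lower bound. The paper sidesteps all of this with a more elementary move: it keeps the plain prior, observes that $\underset{p(\mathbf{S})}{\mathbb{E}}\,q(\mathbf{Y}|\mathbf{S})=\|\mathbf{W}^L\|_1/|\mathcal{S}^L|+\|\mathbf{W}^U\|_1/|\mathcal{S}^U|\geq 0$, and simply subtracts this quantity from the exact ELBO as a second inequality, so the sparsity penalty enters as a deliberate further loosening of the bound rather than as part of $p(\mathbf{Y})$. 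In either treatment the ``$=$'' in Eq.~\eqref{eq:llc} is really a ``$\geq$'' (the paper's own proof makes this explicit with a two-step chain of inequalities). Your route buys a cleaner probabilistic reading of the penalty at the cost of the normalizer bookkeeping; the paper's buys brevity at the cost of the penalty being ad hoc.
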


\begin{proof} Refer to Appendix~\ref{ap:proof} for proof. 
\end{proof}
We treat label $\mathbf{Y}$ as a latent variable and derive the lower bound for the log-likelihood, which is influenced by the prior knowledge of positive class prior $\alpha$ and true positive ratio $\beta$. Thus, maximizing the lower bound can jointly optimize model parameters and infer the posterior label distribution, $\mathbf{W}^U$ and $\mathbf{W}^L$. Such a learning process enables us to avoid false negative/positive issues during model training since it considers $\phi_0^l$ (one negative triple is collected) and $\phi_1^u$ (one positive triple is missing) as non-zero probability, which are determined by the latest label posterior during model training.

\noindent \textbf{Probability Measure}.
We then specify the probability measures for positive/negative triples being collected, i.e., $\phi_1^l(\cdot)$ and $\phi_0^l(\cdot)$ ($\phi_y^u(\cdot) = 1 - \phi_y^l(\cdot)$ for $y = 1/0$). To better connect to other methods utilizing score functions for KGR, we hereby utilize the sigmoid function $\sigma(\cdot)$ to directly transform the score function $\psi(s;\Theta)$, parameterized by model parameters $\Theta$, to probability:
\begin{equation}
    \small
    \phi_1^l(s) = \sigma(\psi_1(s;\Theta)), ~~~ \phi_0^l(s) = \sigma(\psi_0(s;\Theta)),
    \label{eq:prob}
\end{equation}
\noindent
we hereby utilize two score functions $\psi_1(s;\Theta)$ and $\psi_0(s;\Theta)$ to measure the positive/negative triples being collected, as the influencing factors based on triple information can be different. We utilize two neural networks to approximate the probability measure, which will be detailed in Section~\ref{sec:encoder}.

Since we aim to detect the potential existence of positive triples in an unlabeled set, it is unnecessary to push the collection probability of all uncollected triple $\phi_y^l(s^u)$ to 0 ($\phi_y^u(s^u)$ to 1). A loose constraint is that we force the uncollection probability of a collected triple $s^l$ lower than its corresponding uncollected triples $s^u$: $\phi_y^u(s^l) < \phi_y^u(s^u)$. Therefore, we adopt the pair-wise ranking measure $\phi^\star_y(s^l,s^u)$ to replace $\phi_y^u(s^u)$ as follows:
\begin{equation}
    \small
    \phi_y^u(s^u) \rightarrow \phi^\star_y(s^l,s^u) = \sigma(\psi_y(s^u;\Theta) - \psi_y(s^l;\Theta)).
    \label{eq:star_0}
\end{equation}
\noindent \textbf{Maximum Probability Training}. We then derive the training objective based on Eq.~\eqref{eq:llc} The first part of Eq.~\eqref{eq:llc} measures the probability of data being collected/uncollected. Concretely, given each collected triple $s_i^l\in\mathcal{S}^L$ and its corresponding $K$ uncollected triples $s^u_{ik}\in\mathcal{S}^U$, We denote the loss function measuring the probability as $\mathcal{L}_{triple}$:
\begin{equation}
\small
    \begin{split}
        \cr\mathcal{L}_{triple} & = -\frac{1}{K|\mathcal{S}^L|}\underset{i}{\sum}\underset{k}{\sum} (w^{l}_i\log [\phi_1^l(s_i^l)] \
        \cr& +(1-w_i^{l})\log [\phi_0^l(s_i^l)] +   w^{u}_{ik}\log[\phi_1^\star(s_i^l,s_{ik}^u)] \
        \cr& +(1-w^{u}_{ik})\log[\phi_0^\star(s_i^l,s_{ik}^u)]),
    \end{split}
    \label{eq:triple}
\end{equation}
\noindent
where $w^{l}_i$ denotes the point-wise probability for the collected triple $s_i^l$ being positive, $w^{u}_{ik}$ denotes the probability for the uncollected triple $s^u_{ik}$ being positive. Based on the definition, the posterior probability of each collected/uncollected triple being positive can be computed as:
 \begin{equation}
    \small
    \Tilde{w}^{l}_i = \frac{\beta\phi_1^l(s^l_i)}{\beta\phi_1^l(s^l_{i}) + (1-\beta)\phi_0^l(s^l_{i})},
\end{equation}
\begin{equation}
    \small
    \Tilde{w}^{u}_{ik} = \frac{\alpha\phi_1^u(s^u_{ik})}{\alpha\phi_1^u(s^u_{ik}) + (1-\alpha)\phi_0^u(s^u_{ik})}.
\end{equation}

To increase model expression ability, instead of forcing $\mathbf{W}^L = \mathbf{\Tilde{W}}^L$ and $\mathbf{W}^U = \mathbf{\Tilde{W}}^U$, we set $\mathbf{W}^L$ and $\mathbf{W}^U$ as free parameters and utilize the term $\mathcal{L}_{KL} = \mathbb{KL}(\mathbf{W}^L\|\mathbf{\Tilde{W}}^L) + \mathbb{KL}(\mathbf{W}^U\|\mathbf{\Tilde{W}}^U)$ to regularize the difference. Finally, based on Eq.~\eqref{eq:llc}, the training objective is formalized as follows:
\begin{equation}
    \small
    \underset{\Theta}{\min}~\mathcal{L} = \underset{\Theta}{\min}~\mathcal{L}_{triple} + \mathcal{L}_{KL} + \mathcal{L}_{reg},
    \label{eq:obj}
\end{equation}
\noindent
where $\mathcal{L}_{reg} = \|\mathbf{W}^L\|_1 + \|\mathbf{W}^U\|_1$ can be viewed as a normalization term. Considering the sparsity property of real-world graphs, $\mathcal{L}_{reg}$ penalizes the posterior estimation that there are too many true positive facts on KG.

\begin{figure}[t]
    \centering
    \includegraphics[width = 1.\linewidth]{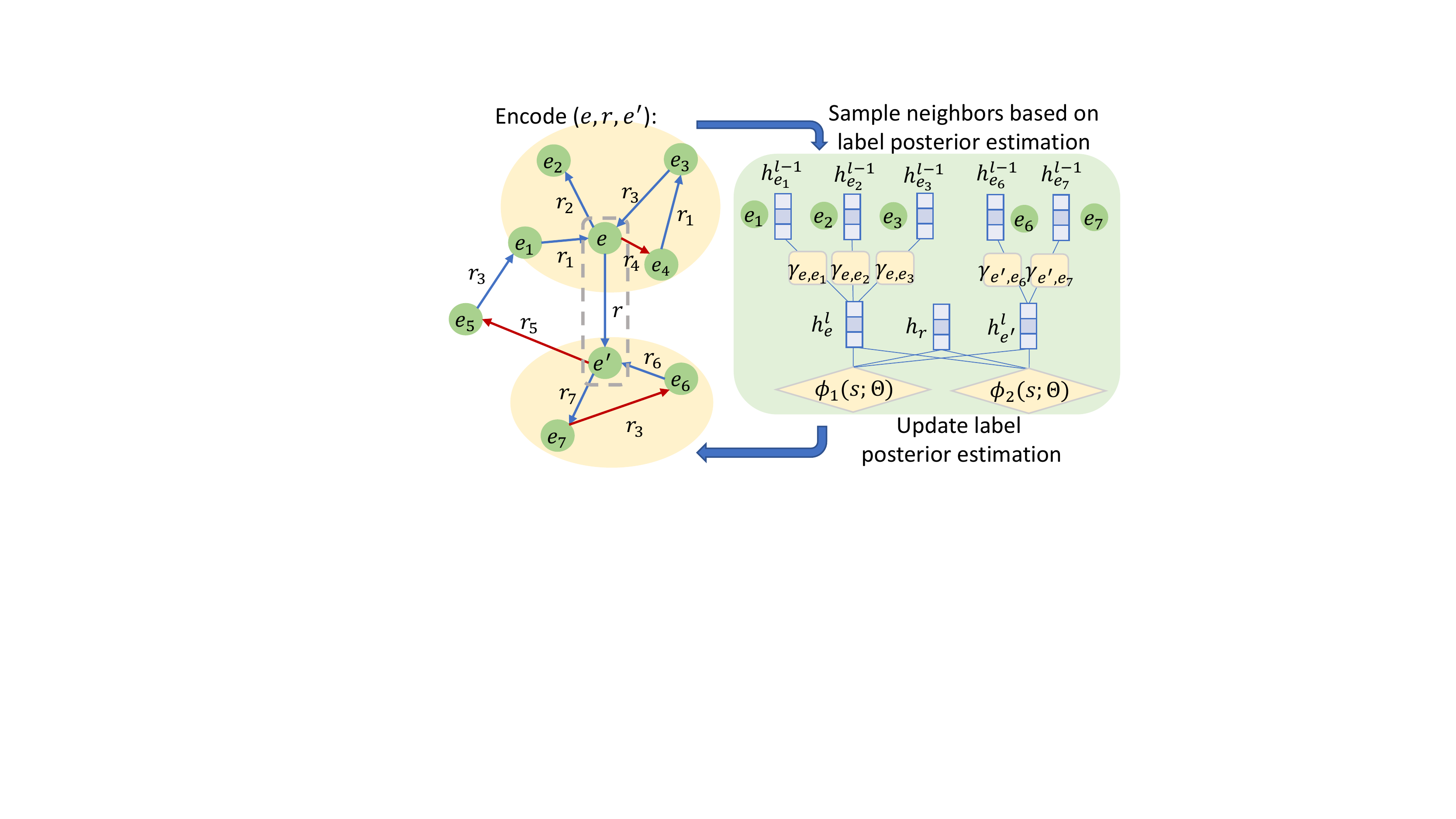}
    \caption{The label posterior-aware encoder. Red line denotes the detected false positive facts based on posterior, which are excluded during neighbor sampling.}
    \label{fig:encoder}
\end{figure}

\subsection{Label Posterior-aware Encoder}
\label{sec:encoder}
We then introduce the encoder and the score functions $\psi_1(s;\Theta)$ and $\psi_0(s;\Theta)$ to measure the probability of positive/negative triples being collected, as shown in Figure~\ref{fig:encoder}. Recent work~\cite{R-GCN,ConvE,ConvKB,CompGCN} has shown that integrating information from neighbors to represent entities engenders better reasoning performance. However, the message-passing mechanism is vulnerable to the false positive issue, as noise can be integrated via a link induced by a false positive fact. In light of this, we propose a label posterior-aware encoder to improve the quality of representations.

We represent each entity $e\in\mathcal{E}$ and each relation $r\in\mathcal{R}$ into a $d$-dimensional latent space: $\mathbf{h}_{e}, \mathbf{h}_r \in \mathbb{R}^d$. To encode more information in $\mathbf{h}_{e}$, we first construct a neighbor set $\mathcal{N}_{e}$ induced by the positive facts related to entity $e$.
The latest label posterior for collected facts $\mathbf{\Tilde{W}}^L$ naturally serves this purpose, as it indicates the underlying correctness for each collected fact.

Therefore, for each entity $e$, we first sort the related facts by label posterior $\mathbf{\Tilde{W}}^L$ and construct the neighbor set $\mathcal{N}_e(\mathbf{\Tilde{W}}^L) = \{(e_i, r_i)\}$ from the top facts. Then the encoder attentively aggregates information from the collected neighbors, where the attention weights take neighbor features, relation features into account. Specifically:
\begin{equation}
    \small
    \mathbf{h}^l_{e} = \mathbf{h}^{l-1}_{e} + \sigma\left(\sum_{(e_i, r_i) \in \mathcal{N}_{e}(\mathbf{\Tilde{W}}^L)} \gamma^l_{e,e_i} \left(\mathbf{h}_{e_i}^{l-1} \mathbf{M}\right)\right),
\end{equation}
\noindent where $l$ denotes the layer number, $\sigma(\cdot)$ denotes the activation function, $\gamma^l_{e,e_i}$ denotes the attention weight of entity $e_i$ to the represented entity $e$, and $\mathbf{M}$ is the trainable transformation matrix. The attention weight $\gamma^l_{e,e_i}$ is supposed to be aware of entity feature and topology feature induced by relations. We design the attention weight $\gamma^l_{e,e_i}$ as follows:
\begin{equation}
    \small
      \gamma^l_{e,e_i} = \frac{\exp(q^l_{e,e_i})}{\underset{\mathcal{N}_{e}(\mathbf{\Tilde{W}}^L)}{\sum}\exp(q^l_{e,e_k})}, ~~ q^l_{e,e_k} = \mathbf{a}\left(\mathbf{h}^{l-1}_{e} \| \mathbf{h}^{l-1}_{e_k} \| \mathbf{h}_{r_k}\right),  
\end{equation}
\noindent
where $q^l_{e,e_k}$ measures the pairwise importance from neighbor $e_k$ by considering the entity embedding, neighbor embedding, and relation embedding, $\mathbf{a}\in\mathbb{R}^{3d}$ is a shared parameter in the attention. 


To measure the collection probability for positive/negative triples, we utilize two multilayer perception (MLP) to approximate score function $\psi_1(s;\Theta)$ and $\psi_0(s;\Theta)$. Specifically, for each triple $s = (e_h, r, e_t)$:
\begin{equation}
    \small
    \psi_1(s;\Theta) = \text{MLP}_1(\mathbf{h}_s), ~ \psi_0(s;\Theta) = \text{MLP}_0(\mathbf{h}_s),
\end{equation}
\noindent
where the MLP input $\mathbf{h}_s = [\mathbf{h}_{e_h}^l\|\mathbf{h}_{r}\|\mathbf{h}_{e_t}^l]$ concatenates entity embeddings and relation embedding.

\subsection{Self-Training Strategy}
\label{sec:st}
The latest label posterior $\mathbf{\Tilde{W}}^L$ and $\mathbf{\Tilde{W}}^U$ is further utilized in a self-training strategy to enhance speculative reasoning. First, the latest posterior estimation $\mathbf{\Tilde{W}}^L$ for collected links updates neighbor sets to gradually prevent the encoder effects by false positive links. Moreover, the latest estimation $\mathbf{\Tilde{W}}^U$ for uncollected facts enables us to continuously sample unlabeled triplets with high label posterior to cover positive samples in the unlabeled set to the greatest extent. For each labeled triple $s_i^l = (e_h, r, e_t)$, we construct $K$ unlabeled triples $s_{ik}^u$ by replacing the head and tail respectively with other entities: $s_{ik}^u = (e_h, r, e_k^{-}) \ \text{or} \ (e_k^{-}, r, e_t)$, where $e_k^{-}$ is the selected entity that ensures $s_{ik}^u \notin \mathcal{S}^L$. Such selection is performed by ranking the corresponding label posterior $\Tilde{w}_{ik}^u$. The updates of neighbor sets and unlabeled triples based on label posterior are nested with parameter optimization during model training alternatively. The training of \model is summarized in Algorithm~\ref{al:training}.


\label{sec:opt}
\begin{algorithm}[t]
\caption{Summary of \model.}
\label{al:training}
\small
\KwIn{The collected triple set $\mathcal{S}^L$.}
\KwOut{The model parameter $\Theta$, predicted triples.}
Construct the uncollected triple set $\mathcal{S}^U$ randomly; \\
Initialize the model parameter $\Theta$ and the label posterior $\Tilde{\mathbf{W}}^L$ and $\Tilde{\mathbf{W}}^U$ randomly; \\
\For{each training epoch}{
    Construct neighbor set $\mathcal{N}_e(\Tilde{\mathbf{W}}^L)$ by $\Tilde{\mathbf{W}}^L$; \\
    Construct uncollected triple set $\mathcal{S}^U$ by $\Tilde{\mathbf{W}}^U$; \\
    \For{each collected triple $s_i^l\in\mathcal{S}^L$}{
        Collect unlabeled triples $\{s^u_{ik}\}_{k=1}^K$;\\
        Calculate $\phi_y^l(s^l_i)$ by Eq.~\eqref{eq:prob};\\
        Calculate each $\phi_y^\star(s^l_i, s^u_{ik})$ by Eq.~\eqref{eq:star_0};\\
    }
    Calculate the total loss $\mathcal{L}$ by Eq.~\eqref{eq:obj};\\
    Optimize model parameter: $\Theta = \Theta - \frac{\partial\mathbf{L}}{\partial\Theta}$;\\
    Update label posterior $\Tilde{\mathbf{W}}^L$ and $\Tilde{\mathbf{W}}^U$;\\
}
\end{algorithm}
\vspace{-3mm}

\section{Experiment}

\subsection{Experimental Setup}

\noindent\textbf{Dataset}.
We evaluate \model mainly on three benchmark datasets: FB15K~\cite{TransE}, FB15k-237~\cite{toutanova-etal-2015-representing}, and WN18~\cite{TransE} and one Twitter data we collected, which describes user interaction information towards tweets and hashtags. Table~\ref{tb:data} summarizes the dataset statistics.

To better fit the real scenario for speculative reasoning, we randomly modify links on KG to simulate more false negative/positive cases. We modify a specific amount of positive/negative links (the ratio of the modified links is defined as perturbation rate, i.e., \textit{ptb\_rate}) by flipping. $90\%$ of them are the removed positive links to simulate false negative cases and the remaining $10\%$ are the added negative links to simulate false positive cases. More details about datasets and the data perturbation process can be found in Appendix~\ref{ap:data}.


\begin{table}[t]
\caption{The statistics of the datasets.}
\label{tb:data}
\small
\centering
\resizebox{1.0\linewidth}{!}{
\fontsize{8.5}{11}\selectfont
\begin{tabular}{c|c|c|c|c|c|c}
\toprule
\textbf{{\em ptb\_rate\/}}& \textbf{Dataset}   & \textbf{\textbf{$|\mathcal{E}|$}} & \textbf{\textbf{$|\mathcal{R}|$}} & \textbf{\#Train} & \textbf{\#Valid} & \textbf{\#Test} \\ \hline
\multirow{4}{*}{0.1} & \textbf{FB15K} & 14,951 & 1,345 & 340,968 & 146,129 & 59,071 \\
&\textbf{FB15K-237} & 14,541 & 237 & 184,803 & 79,201 & 20,466 \\
&\textbf{WN18} & 40,943 & 18 & 92,428 & 39,612 & 5,000 \\
&\textbf{Twitter} & 17,839 & 2 & 282,233 & 120,956 & 110,456 \\ \hline
\multirow{4}{*}{0.3} & \textbf{FB15K}     & 14,951              & 1,345               & 276,940          & 118,688          & 59,071          \\
&\textbf{FB15K-237} & 14,541              & 237                 & 149,229          & 63,954           & 20,466          \\
&\textbf{WN18}      & 40,943              & 18                  & 72,462           & 31,055           & 5,000           \\
&\textbf{Twitter}   & 17,839              & 2                   & 232,748          & 99,749           & 110,456         \\
\hline
\multirow{4}{*}{0.5} & \textbf{FB15K} & 14,951 & 1,345 & 213,380 & 91,448 & 59,071 \\
&\textbf{FB15K-237} & 14,541 & 237 & 113,772 & 48,759 & 20,466 \\
&\textbf{WN18} & 40,943 & 18 & 52,707 & 22,588 & 5,000 \\
&\textbf{Twitter} & 17,839 & 2 & 183,263 & 78,540 & 110,456 \\ 
\hline
\multirow{4}{*}{0.7} & \textbf{FB15K} & 14,951 & 1,345 & 150,485 & 64,493 & 59,071 \\
&\textbf{FB15K-237} & 14,541 & 237 & 78,531 & 33,656 & 20,466 \\
&\textbf{WN18} & 40,943 & 18 & 34,984 & 14,993 & 5,000 \\
&\textbf{Twitter} & 17,839 & 2 & 133,778 & 57,333 & 110,456 \\  
\bottomrule
\end{tabular}}
\end{table}

\begin{table*}[t]
\caption{Overall performance on noisy and incomplete graphs, with {\em ptb\_rate = 0.3\/}. Average results on $5$ independent runs are reported. $*$ indicates the statistically significant results over baselines, with $p$-value $<0.01$. The best results are in boldface, and the strongest baseline performance is underlined.}
\label{tb:main_0.3}
\small
\centering
\resizebox{0.95\textwidth}{!}{
\fontsize{8.5}{11}\selectfont
\begin{tabular}{c|cccc|cccc|cccc}
\toprule
\textbf{Dataset} & \multicolumn{4}{c|}{\textbf{FB15K}} & \multicolumn{4}{c|}{\textbf{FB15K-237}} & \multicolumn{4}{c}{\textbf{WN18}} \\ \hline
\textbf{Metrics} & \textbf{MRR} & \textbf{H@10} & \textbf{H@3} & \textbf{H@1} & \textbf{MRR} & \textbf{H@10} & \textbf{H@3} & \textbf{H@1} & \textbf{MRR} & \textbf{H@10} & \textbf{H@3} & \textbf{H@1} \\ \hline
\multicolumn{13}{c}{ \textit{Knowledge graph embedding methods}} \\ \hline
\textbf{TransE} & 0.336 & 0.603 & 0.425 & 0.189 & 0.196 & 0.394 & 0.236 & 0.094 & 0.229 & 0.481 & 0.416 & 0.030 \\
\textbf{TransR} & 0.314 & 0.579 & 0.397 & 0.170 & 0.184 & 0.359 & 0.211 & 0.098 & 0.229 & 0.480 & 0.408 & 0.035 \\
\textbf{DistMult} & 0.408 & 0.627 & 0.463 & 0.296 & 0.240 & 0.407 & 0.262 & 0.158 & 0.397 & 0.518 & 0.453 & 0.320 \\
\textbf{ComplEx} & 0.396 & 0.616 & 0.451 & 0.284 & 0.238 & 0.411 & 0.262 & 0.154 & 0.448 & 0.526 & 0.475 & 0.403 \\
\textbf{RotatE} & {\ul 0.431} & {\ul 0.636} & {\ul 0.489} & {\ul 0.323} & {\ul 0.255} & {\ul 0.433} & {\ul 0.280} & 0.169 & 0.446 & 0.524 & 0.474 & 0.400 \\ \hline
\multicolumn{13}{c}{ \textit{Graph neural network methods on KG}} \\\hline
\textbf{RGCN} & 0.154 & 0.307 & 0.164 & 0.078 & 0.141 & 0.276 & 0.145 & 0.075 & 0.362 & 0.464 & 0.412 & 0.300 \\
\textbf{CompGCN} & 0.409 & 0.631 & 0.465 & 0.294 & 0.253 & 0.422 & 0.275 & {\ul 0.171} & 0.445 & 0.522 & 0.471 & 0.400 \\ \hline
\multicolumn{13}{c}{ \textit{Uncertain knowledge graph embedding method}} \\\hline
\textbf{UKGE} & 0.311 & 0.556 & 0.337 & 0.189 & 0.172 & 0.233 & 0.128 & 0.081 & 0.241 & 0.447 & 0.309 & 0.119 \\ \hline
\multicolumn{13}{c}{ \textit{Negative sampling methods}} \\\hline
\textbf{NSCaching} & 0.371 & 0.576 & 0.424 & 0.265 & 0.190 & 0.329 & 0.208 & 0.121 & 0.306 & 0.401 & 0.334 & 0.255 \\
\textbf{SANS} & 0.372 & 0.599 & 0.434 & 0.252 & 0.243 & 0.416 & 0.267 & 0.158 & {\ul 0.453} & {\ul 0.528} & {\ul 0.479} & {\ul 0.409} \\ \hline
\multicolumn{13}{c}{ \textit{Positive-Unlabeled learning methods on KG}} \\\hline
\textbf{PUDA} & 0.403 & 0.623 & 0.458 & 0.291 & 0.234 & 0.394 & 0.255 & 0.156 & 0.382 & 0.499 & 0.444 & 0.306 \\ \hline
\textbf{nPUGraph} & \textbf{0.486*} & \textbf{0.718*} & \textbf{0.534*} & \textbf{0.342*} & \textbf{0.287*} & \textbf{0.481*} & \textbf{0.315*} & \textbf{0.191*} & \textbf{0.493*} & \textbf{0.582*} & \textbf{0.519*} & \textbf{0.442*} \\ \hline
\rowcolor{LightCyan} \textbf{Gains ($\%$)} & \textit{12.7} & \textit{12.8} & \textit{9.2} & \textit{5.9} & \textit{12.6} & \textit{11.2} & \textit{12.5} & \textit{11.4} & \textit{8.9} & \textit{10.3} & \textit{8.3} & \textit{8.0} \\ \bottomrule
\end{tabular}}
\end{table*}

\noindent\textbf{Baselines}.
We compare to eleven state-of-the-art baselines: 1) KG embedding methods: \textbf{TransE}~\cite{TransE}, \textbf{TransR}~\cite{TransR}, \textbf{DistMult}~\cite{DistMult}, \textbf{ComplEX}~\cite{ComplEX}, and \textbf{RotatE}~\cite{RotatE}; 2) GNN methods on KG: \textbf{RGCN}~\cite{R-GCN} and \textbf{CompGCN}~\cite{CompGCN}; 3) Uncertain KG reasoning: \textbf{UKGE}~\cite{UKGE}; 4) Negative sampling methods: \textbf{NSCaching}~\cite{NSCaching} and \textbf{SANS}~\cite{SANS}; 5) PU learning on KG: \textbf{PUDA}~\cite{PUDA}. More details can be found in Appendix~\ref{ap:baseline}.



\noindent\textbf{Evaluation and Implementation}.
For each $(e_h, r, e_?)$ or $(e_?, r, e_t)$, we rank all entities at the missing position in triples, and adopt filtered mean reciprocal rank ({\em MRR}) and filtered Hits at $\{1,3,10\}$ as evaluation metrics~\cite{TransE}. More implementation details of baselines and \model can be found in Appendix~\ref{ap:implementation}.

\subsection{Main Results}
We first discuss the model performance on noisy and incomplete graphs, with {\em ptb\_rate = 0.3\/}, as shown in Table~\ref{tb:main_0.3}. \model achieves consistently better results than all baseline models, with $10.3\%$ relative improvement on average. Specifically, conventional KGE and GNN-based methods produce unsatisfying performance, as they ignore the false negative/positive issues during model training. In some cases, GNN-based ways are worse, as the message-passing mechanism is more vulnerable to false positive links. As expected, the performance of uncertain knowledge graph embedding model~\cite{UKGE} is much worse when there are no available uncertainty scores for model training. SANS and PUDA generate competitive results in some cases, as their negative sampling strategy and PU learning objective can respectively mitigate the false negative/positive issues to some extent. Table~\ref{tb:main_0.3} demonstrates the superiority of \model which addresses the false negative/positive issues simultaneously. For space limitations, we report and discuss the model performance on Twitter data in Table~\ref{tb:twitter} in Appendix~\ref{ap:twitter}.

\begin{figure}
    \centering
    \includegraphics[width = 0.95\linewidth]{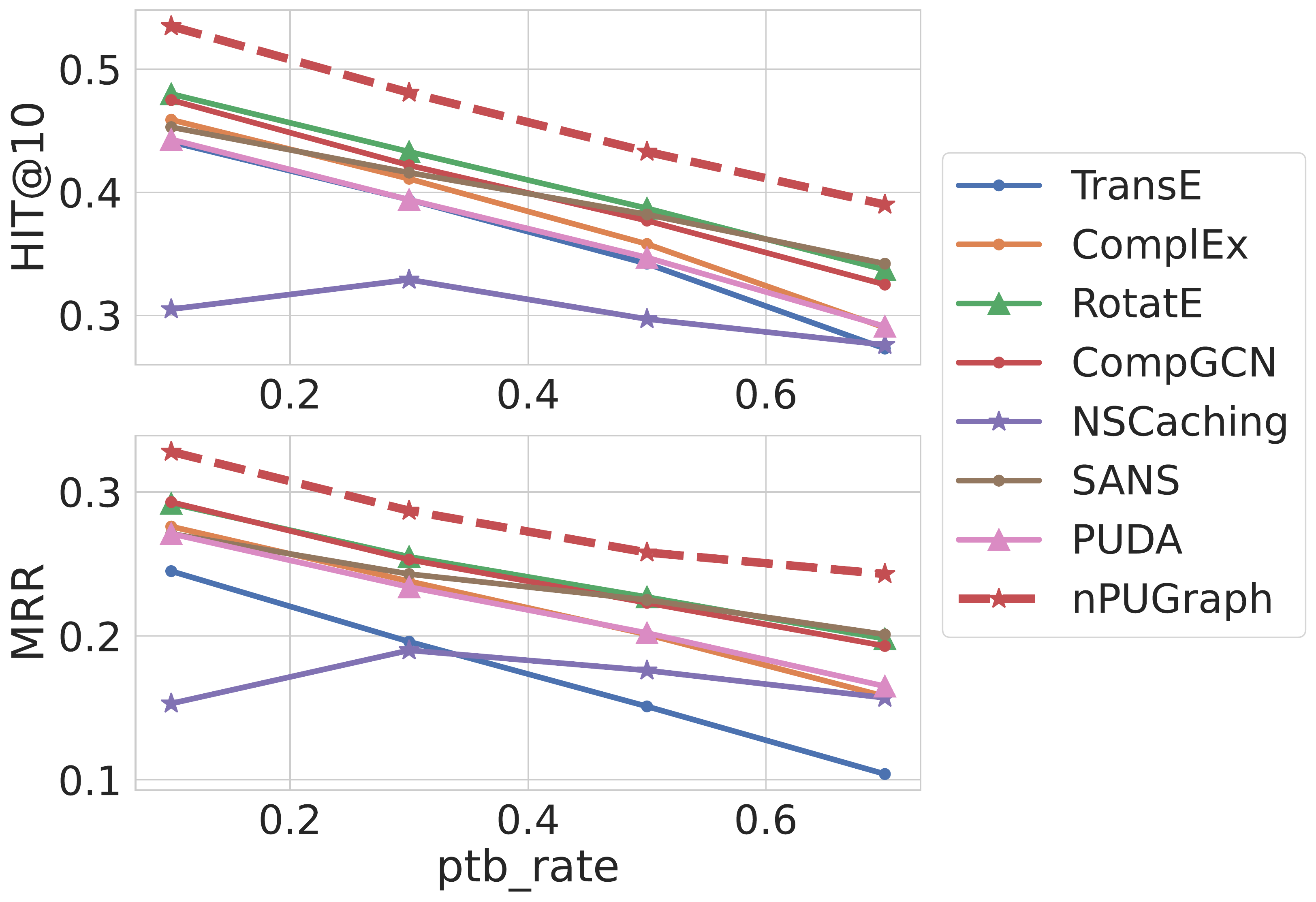}
    \caption{Performance with respect to various {\em ptb\_rate \/}, i.e., different degrees of noise and incompleteness, on \textit{FB15K-237}. \model exhibits impressive robustness against false negative/positive issues.}
    \label{fig:FB15K-237}
\end{figure}

\subsection{Experiments under Various Degrees of Noise and Incompleteness}
We investigate the performance of baseline models and \model under different degrees of noise and incompleteness. Figure~\ref{fig:FB15K-237} reports the performance under various {\em ptb\_rate \/}, from 0.1 to 0.7, where higher {\em ptb\_rate \/} means more links are perturbed as false positive/negative cases. Full results are included in Appendix~\ref{ap:allresult}. The performance degrades as the {\em ptb\_rate \/} increases for all in most cases, demonstrating that the false negative/positive issues significantly affect the reasoning performance. However, \model manages to achieve the best performance in all cases. Notably, the relative improvements are more significant under higher {\em ptb\_rate \/}. 

\begin{table}[t]
\caption{Ablation Studies.}
\label{tb:ablation}
\centering
\scriptsize
\resizebox{1.0\linewidth}{!}{
    \fontsize{8.5}{11}\selectfont
\begin{tabular}{c|cc|cc|c}
\toprule
\multicolumn{1}{c|}{\textbf{Dataset}} & \multicolumn{2}{c|}{\textbf{FB15K}} & \multicolumn{2}{c|}{\textbf{FB15K-237}} & \textbf{Gains} \\ \hline
\multicolumn{1}{c|}{\textbf{Ablations}} & \textbf{MRR} & \textbf{H@10} & \textbf{MRR} & \textbf{H@10} & \textbf{\%} \\ \hline
\textbf{nPUGraph w/o nPU} & 0.401 & 0.619 & 0.230 & 0.407 & -20.0 \\
\textbf{nPUGraph w/o  LP-Encoder} & 0.457 & 0.681 & 0.261 & 0.459 & -6.6 \\
\textbf{nPUGraph w/o  Self-Training} & 0.471 & 0.704 & 0.276 & 0.461 & -3.4 \\ \hline
\textbf{nPUGraph} & 0.486 & 0.718 & 0.287 & 0.481 & - \\ \bottomrule
\end{tabular}}
\vspace{-5mm}
\end{table}

\subsection{Model Analysis}
\noindent \textbf{Ablation Study}.
We evaluate performance improvements brought by the \model framework by following ablations: 1) {\bf \model w/o nPU} is trained without the noisy Positive-Unlabeled framework, which instead utilizes the margin loss for model training; 2) {\bf \model w/o LP-Encoder} eliminates the label posterior-aware encoder (LP-Encoder), which aggregates information from all neighbors instead of the sampled neighbors; 3) {\bf \model w/o Self-Training} is trained without the proposed self-training algorithm.

We report {\em MRR\/} and {\em Hit@10\/} over FB15K and FB15K-237 data, as shown in Table~\ref{tb:ablation}. As we can see, training the encoder without the proposed noisy Positive-Unlabeled framework will cause the performance drop, as this variant ignores the false negative/positive issues. Removing the label posterior-based neighbor sampling in the encoder will also cause performance degradation, as the information aggregation no longer distinguishes between true and false links. Such a variant can be easily influenced by the existence of false positive facts. Moreover, the last ablation result shows if the training process is further equipped with the self-training strategy, the performance will be enhanced, which verifies its effectiveness to select informative unlabeled samples for model training.

\noindent\textbf{The Effect of PU Triple Construction}.
We then investigate the effect of PU Triple Construction on model performance, by varying different sizes of unlabeled samples from $10$ to $50$. Figure~\ref{fig:PUSize} shows that the performance improves as the number of unlabeled samples increases. Because more unlabeled samples can cover more false negative cases for model training. The training time grows linearly.

\noindent\textbf{The Effect of Positive Class Prior $\alpha$}.
The positive class prior $\alpha$ and true positive ratio $\beta$ are two important hyperparameters. While $\beta$ has a clear definition from real-world data, the specific value of $\alpha$ is unknown in advance. Figure~\ref{fig:classprior} shows the model performance w.r.t. different values of $\alpha$ by grid search. Reasoning performance fluctuates a bit with different values of $\alpha$ since incorrect prior knowledge of $\alpha$ can bias the label posterior estimation and thus hurt the performance.  

\begin{figure}[t]
    \centering
    \begin{subfigure}[b]{0.55\linewidth}
    \centering
    \includegraphics[width = \linewidth]{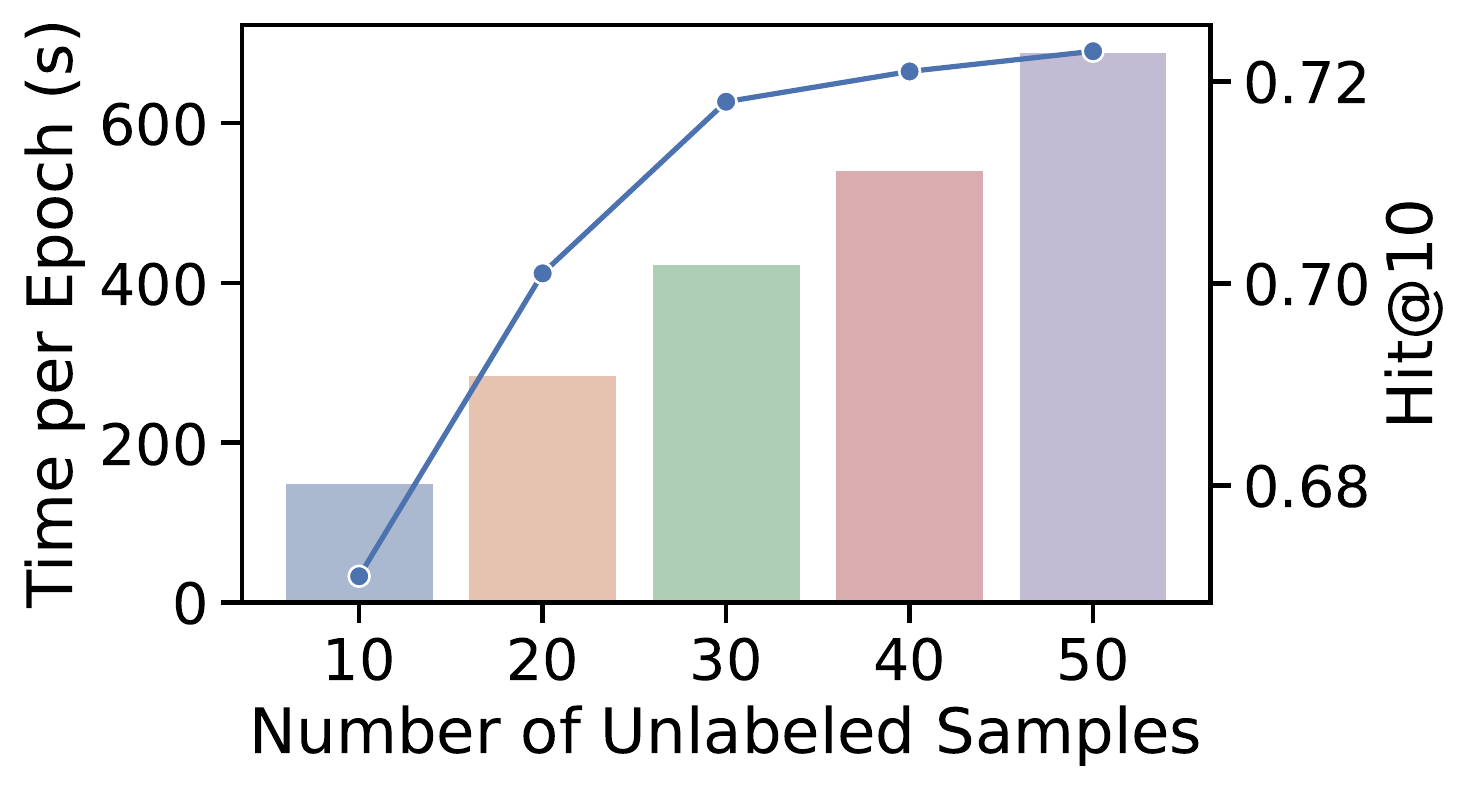}
    \caption{\#unlabeled samples.}
    \label{fig:PUSize}
    \end{subfigure}
    \begin{subfigure}[b]{0.43\linewidth}
    \centering
    \includegraphics[width = \linewidth]{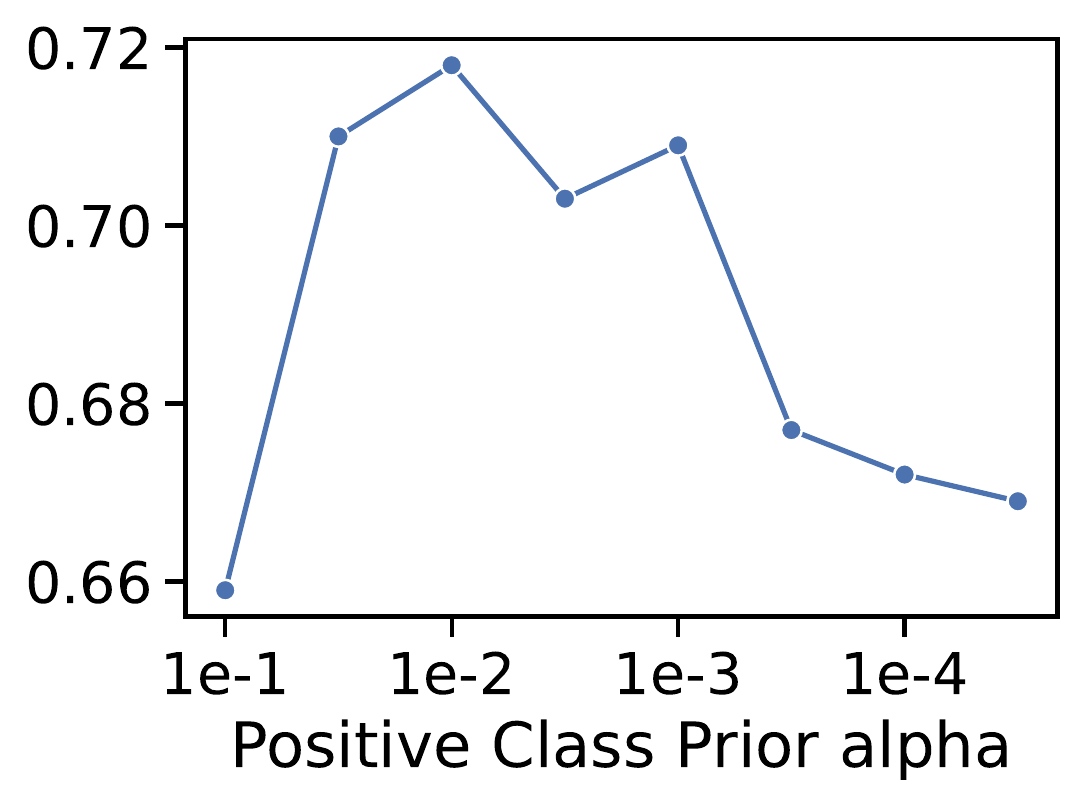}
    \caption{Class prior $\alpha$.}
    \label{fig:classprior}
    \end{subfigure}
    \caption{Model analysis w.r.t. the number of unlabeled samples and positive class prior $\alpha$.}
    \label{fig:analysis}
\end{figure}

\section{Related Work}

\noindent \textbf{Knowledge Graph Reasoning}.
Knowledge graph reasoning (KGR) aims to predict missing facts to automatically complete KG, including one-hop reasoning~\cite{TransE} and multi-hop reasoning~\cite{KBQA}. It has facilitated a wide spectrum of knowledge-intensive applications~\cite{wang2018ace,AND,KBQA,KG4Social,misinfo,polarization,CMH,bright,DKGA,infovgae,wang2022rete}.To set the scope, we primarily focus on one-hop reasoning and are particularly interested in predicting missing entities in a partial fact. Knowledge graph embeddings achieve state-of-the-art performance~\cite{TransE,TransR,DistMult,ComplEX,RotatE,TA-DistMult,wang2022learning,wang2023mpkd}. Recently, graph neural networks (GNN) have been incorporated to enhance representation learning by aggregating neighborhood information~\cite{R-GCN,ConvE,ConvKB,CompGCN,RE-GCN}. However, most approaches significantly degrade when KG are largely incomplete and contain certain errors~\cite{noisyKG}, as they ignore the false negative/positive issues. Recent attempts on uncertain KG~\cite{UKGE,GTransE,boxembedding} measure the uncertainty score for facts, which can detect false negative/positive samples. However, they explicitly require the ground truth uncertainty scores for model training, which are usually unavailable in practice. Various negative sampling strategies have been explored to sample informative negative triples to facilitate model training~\cite{KBGAN,NSCaching,SANS}. However, they cannot detect false negative/positive facts. We aim to mitigate the false negative/positive issues and enable the automatic detection of false negative/positive facts during model training. 


\noindent \textbf{Positive-Unlabeled Learning}. Positive-Unlabeled (PU) learning is a learning paradigm for training a model that only has access to positive and unlabeled data, where unlabeled data includes both positive and negative samples~\cite{PULearning,PUSurvey}. PU learning roughly includes (1) two-step solutions~\cite{PU_two_step,nPU}; (2) methods that consider the unlabeled samples as negative samples with label noise~\cite{PU_noisy}; (3) unbiased risk estimation methods~\cite{PULearning,PUDA}. Recent work further studies the setting that there exists label noise in the observed positive samples~\cite{nPU}. We formulate the KGR task on noisy and incomplete KG as a noisy Positive-Unlabeled learning problem and propose a variational framework for it, which relates to two-step solutions and unbiased risk estimation methods.

\section{Conclusion}
\label{sec:conclusion}

We studied speculative KG reasoning based on sparse and unreliable observations, which contains both \textit{false negative issue} and \textit{false positive issue}. We formulated the task as a noisy Positive-Unlabeled learning problem and proposed a variational framework \model to jointly update model parameters and estimate the posterior likelihood of collected/uncollected facts being true or false, where the underlying correctness is viewed as latent variables. During the training process, a label posterior-aware encoder and a self-training strategy were proposed to further address the false positive/negative issues. We found label posterior estimation plays an important role in moving toward speculative KG reasoning in reality, and the estimation can be fulfilled by optimizing an alternative objective without additional cost. Extensive experiments verified the effectiveness of \model on both benchmark KGs and Twitter interaction data with various degrees of data perturbations. 


\section*{Limitations}
\label{sec:limitation}
There are certain limitations that can be concerned for further improvements. First, the posterior inference relies on the prior estimation of positive class prior $\alpha$ and true positive ratio $\beta$. Our experiments show that a data-driven estimation based on end-to-end model training produces worse results than a hyperparameter grid search. An automatic prior estimation is desirable for real-world applications. Moreover, in \model, we approximate the probability of negative/positive facts being collected/uncollected via neural networks, which lacks a degree of interpretability. In the future, we plan to utilize a more explainable random process depending on entity/relation features to model the collection probability distribution. 

\section*{Ethical Impact}
\label{sec:ethical}

 \model neither introduces any social/ethical bias to the model nor amplifies any bias in data. Benchmark KG are publicly available. For Twitter interaction data, we mask all identity and privacy information for users, where only information related to user interactions with tweets and hashtags is presented. Our model is built upon public libraries in PyTorch. We do not foresee any direct social consequences or ethical issues.

 \section*{Acknowledgements}

 The authors would like to thank the anonymous reviewers for their valuable comments and suggestions. Research reported in this paper was sponsored in part by DARPA award HR001121C0165, DARPA award HR00112290105, DoD Basic Research Office award HQ00342110002, the Army Research Laboratory under Cooperative Agreement W911NF-17-20196. It was also supported in part by ACE, one of the seven centers
in JUMP 2.0, a Semiconductor Research Corporation (SRC)
program sponsored by DARPA. The views and conclusions contained in this document are those of the author(s) and should not be interpreted as representing the official policies of DARPA and DoD Basic Research Office or the Army Research Laboratory. The US government is authorized to reproduce and distribute reprints for government purposes notwithstanding any copyright notation hereon.
\bibliographystyle{acl_natbib}
\bibliography{main.bib}
\newpage
\appendix
\section{Appendix}

\subsection{Theorem Proof}
\label{ap:proof}

\begin{theorem}
\label{the:llc_ap}

The log-likelihood of the complete data $\log p(\mathbf{S})$ is lower bounded as follows:
\begin{equation}
\scriptsize
    \begin{split}
        \cr &\log p(\mathbf{S})\geq \underset{q(\mathbf{Y})}{\mathbb{E}} \left[\log p(\mathbf{S}|\mathbf{Y})\right] - \mathbb{KL}(q(\mathbf{Y})\| p(\mathbf{Y})) \
        \cr& = \underset{s^l\in\mathcal{S}^L}{\mathbb{E}}\left[w^{l}\log[\phi_1^l(s^l)]+(1-w^{l})\log[\phi_0^l(s^l)]\right] \
        \cr& + \underset{s^u\in\mathcal{S}^U}{\mathbb{E}}\left[(w^{u}\log[\phi_1^u(s^u)]+(1-w^{u})\log[\phi_0^u(s^u)]\right] \
        \cr& - \mathbb{KL}(\mathbf{W}^U\|\mathbf{\Tilde{W}}^U) - \mathbb{KL}(\mathbf{W}^L\|\mathbf{\Tilde{W}}^L) - \frac{\|\mathbf{W}^L\|_1}{|\mathcal{S}^L|} - \frac{\|\mathbf{W}^U\|_1}{|\mathcal{S}^U|},
    \end{split}
    \label{eq:llc_ap}
\end{equation}
\noindent
where $\mathbf{S}$ denotes all labeled/unlabeled triples, $\mathbf{Y}$ is the corresponding latent variable indicating the positive/negative labels for triples, $\mathbf{W}^U = \{w^{u}_i\}$ denotes the point-wise probability for the uncollected triples being positive, $\mathbf{W}^L = \{w^{l}_i\}$ denotes the probability for the collected triples being positive.
\end{theorem}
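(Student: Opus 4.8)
The plan is to treat this as a standard variational lower bound (ELBO) derivation, adapted to the PU setting where the data likelihood is the probability of the observed collection pattern. First I would introduce a variational distribution $q(\mathbf{Y})$ over the latent labels and apply Jensen's inequality to the marginal log-likelihood: writing $\log p(\mathbf{S}) = \log \sum_{\mathbf{Y}} p(\mathbf{S},\mathbf{Y})$, multiplying and dividing by $q(\mathbf{Y})$, and pushing the $\log$ inside the expectation yields $\log p(\mathbf{S}) \geq \mathbb{E}_{q(\mathbf{Y})}[\log p(\mathbf{S}|\mathbf{Y})] - \mathbb{KL}(q(\mathbf{Y})\|p(\mathbf{Y}))$. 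This establishes the first line of Eq.~\eqref{eq:llc_ap} directly. Assuming the triples are conditionally independent given their labels, both the expected log-likelihood and the KL term factorize over the labeled set $\mathcal{S}^L$ and the unlabeled set $\mathcal{S}^U$, so the rest of the proof is about identifying what each factor evaluates to under the chosen parameterization.

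The key modeling step is to specify $q(\mathbf{Y})$ and $p(\mathbf{Y})$ concretely. I would let $q$ be a product of Bernoulli distributions with parameters $\mathbf{W}^L$ and $\mathbf{W}^U$ (the point-wise probabilities of each triple being positive), and let the prior $p(\mathbf{Y})$ be Bernoulli with parameters $\mathbf{\Tilde{W}}^L$ and $\mathbf{\Tilde{W}}^U$ (the model's current posterior estimates, playing the role of the prior in this alternating scheme). For the likelihood term, for each labeled triple the conditional probability of being collected decomposes as $\phi_1^l(s^l)$ when $y=1$ and $\phi_0^l(s^l)$ when $y=0$; taking the Bernoulli expectation under $q$ gives exactly $w^l\log[\phi_1^l(s^l)] + (1-w^l)\log[\phi_0^l(s^l)]$, matching the second line, and symmetrically the unlabeled term gives the third line with the uncollection probabilities $\phi_y^u$. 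The KL between the two Bernoulli products yields $\mathbb{KL}(\mathbf{W}^L\|\mathbf{\Tilde{W}}^L) + \mathbb{KL}(\mathbf{W}^U\|\mathbf{\Tilde{W}}^U)$, accounting for the two KL terms in the last line.

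The remaining pieces to reconcile are the two $\ell_1$ penalty terms $\frac{\|\mathbf{W}^L\|_1}{|\mathcal{S}^L|}$ and $\frac{\|\mathbf{W}^U\|_1}{|\mathcal{S}^U|}$. Since $\mathbf{W}^L,\mathbf{W}^U\in[0,1]$ componentwise, their $\ell_1$ norms equal the sums of their entries, so these terms are proportional to the expected number of positive triples under $q$. I would justify them either by folding a sparsity-promoting factor into the prior $p(\mathbf{Y})$ — so that the $-\log$ prior contributes an additive linear term in each $w$ on top of the Bernoulli KL — or by introducing them as an explicit regularizer reflecting the sparsity assumption on real-world KG, as the surrounding text motivates. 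The main obstacle is precisely the bookkeeping here: making sure the Bernoulli prior and the sparsity penalty are defined consistently so that expanding $-\mathbb{KL}(q\|p)$ reproduces both the stated KL terms and the $\ell_1$ terms with the correct normalizing constants $1/|\mathcal{S}^L|$ and $1/|\mathcal{S}^U|$, rather than double-counting. Everything else reduces to routine expansion of Bernoulli expectations and KL divergences once the factorization and the parameterization of $q$ and $p$ are fixed.
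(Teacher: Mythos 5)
Your derivation of the first line (standard ELBO via Jensen or, equivalently, via non-negativity of $\mathbb{KL}(q(\mathbf{Y}|\mathbf{S})\|p(\mathbf{Y}|\mathbf{S}))$), the factorized Bernoulli parameterization of $q$ with parameters $\mathbf{W}^L,\mathbf{W}^U$, the expansion of $\mathbb{E}_q[\log p(\mathbf{S}|\mathbf{Y})]$ into the $w\log\phi_1 + (1-w)\log\phi_0$ terms, and the identification of the KL terms against $\mathbf{\Tilde{W}}^L,\mathbf{\Tilde{W}}^U$ all match the paper's proof essentially step for step. The one place you leave unresolved --- the $\ell_1$ terms --- is handled in the paper by a move that is simpler (and blunter) than either of your two candidates: after writing the ELBO as $\mathbb{E}_q\log p(\mathbf{S}|\mathbf{Y}) - \mathbb{KL}(q\|p(\mathbf{Y}))$, the paper subtracts the additional quantity $\mathbb{E}_{p(\mathbf{S})}q(\mathbf{Y}|\mathbf{S}) = \|\mathbf{W}^L\|_1/|\mathcal{S}^L| + \|\mathbf{W}^U\|_1/|\mathcal{S}^U|$, which is non-negative because $q$ is a probability, so the chain of inequalities remains valid; the term is then interpreted \emph{post hoc} as a sparsity regularizer. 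This is closest to your ``explicit regularizer'' option, and you are right to be wary of your first option (folding sparsity into the prior $p(\mathbf{Y})$), since that would alter the Bernoulli KL and double-count. Note one consequence of the paper's move that your write-up implicitly flags: the displayed ``$=$'' between the ELBO and the four-term expression in the theorem statement is really a further ``$\geq$'', since the genuine ELBO does not contain the $\ell_1$ terms. Your proposal is therefore correct in structure and, if you commit to the ``subtract a non-negative slack term'' justification, reproduces the paper's bound exactly.
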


\begin{proof}
Let $\log p(\mathbf{S})$ denote the log-likelihood of all potential triples being collected in the KG or not, $\mathbf{Y}$ denote the corresponding latent variable indicating the positive/negative labels. We aim to infer the label posterior $p(\mathbf{Y}|\mathbf{S})$, which can be approximated by $q(\mathbf{Y}|\mathbf{S})$. We therefore are interested at the difference between the two, measured by the Kullback–Leibler (KL) divergence as follows:

\begin{equation}
\scriptsize
    \mathbb{KL}(q(\mathbf{Y}|\mathbf{S})\|p(\mathbf{Y}|\mathbf{S})) = -\underset{q(\mathbf{Y}|\mathbf{S})}{\mathbb{E}}\log\left(\frac{p(\mathbf{S}|\mathbf{Y})p(\mathbf{Y})}{q(\mathbf{Y}|\mathbf{S})}\right) + \log p(\mathbf{S}),
\end{equation}
as KL divergence is positive, we derive the lower bound of the log-likelihood as follows:

\begin{equation}
\scriptsize
    \begin{split}
        \cr& \log p(\mathbf{S}) \geq \underset{q(\mathbf{Y}|\mathbf{S})}{\mathbb{E}}\log\left(\frac{p(\mathbf{S}|\mathbf{Y})p(\mathbf{Y})}{q(\mathbf{Y}|\mathbf{S})}\right) \
        \cr& \geq \underset{q(\mathbf{Y}|\mathbf{S})}{\mathbb{E}}\log p(\mathbf{S}|\mathbf{Y}) - \mathbb{KL}(q(\mathbf{Y}|\mathbf{S})\|p(\mathbf{Y})) - \underset{p(\mathbf{S})}{\mathbb{E}}q(\mathbf{Y}|\mathbf{S}),
    \end{split}
    \label{eq:derive}
\end{equation}
\noindent
which consists of three terms: triple collection probability measure, KL term and regularization of label posterior (positive). We discuss each term in detail.

Recall that the distribution of labeled triples can be represented as follows:

\begin{equation}
    \scriptsize
    s^l \sim \beta \phi_1^l(s^l) + (1-\beta) \phi_0^l(s^l),
\end{equation}
\noindent
where $\phi_y^l$ denotes the probability of being collected over triple space $\mathcal{S}$ for the positive class ($y=1$) and negative class ($y=0$), and $\beta\in [0,1)$ denotes the proportion of true positive samples in labeled data. Similarly, considering the existence of unlabeled positive triples, the distribution of unlabeled samples can be represented as follows:
\begin{equation}
    \scriptsize
    s^u \sim \alpha \phi_1^u(s^u) + (1-\alpha) \phi_0^u(s^u),
\end{equation}
\noindent
where $\phi_y^u = 1- \phi_y^l$ denotes the probability of being uncollected, $\alpha\in [0,1)$ is the class prior or the proportion of positive samples in unlabeled data. Based on that, the first term can be detailed as follows:

\begin{equation}
\scriptsize 
    \begin{split}
         \cr& \underset{q(\mathbf{Y}|\mathbf{S})}{\mathbb{E}}\log p(\mathbf{S}|\mathbf{Y}) = \underset{s^l\in\mathcal{S}^L}{\mathbb{E}} \underset{y \in \{0,1\}}{\mathbb{E}} q(y|s^l) \log p(s^l|y) \
         \cr&+ \underset{s^u\in\mathcal{S}^U}{\mathbb{E}} \underset{y \in \{0,1\}}{\mathbb{E}} q(y|s^l) \log p(s^l|y) \
         \cr& = \underset{s^l\in\mathcal{S}^L}{\mathbb{E}}\left[q(y=1|s^l)\log[p(s^l|y=1))]  + q(y=0|s^l)\log[p(s^l|y=0))]\right] \
        \cr& + \underset{s^u\in\mathcal{S}^U}{\mathbb{E}}\left[q(y=1|s^u)\log[p(s^u|y=1))] + q(y=0|s^u)\log[p(s^u|y=0))]\right] \
        \cr& = \underset{s^l\in\mathcal{S}^L}{\mathbb{E}}\left[w^{l}\log[\phi_1^l(s^l)]+(1-w^{l})\log[\phi_0^l(s^l)]\right] \
        \cr& + \underset{s^u\in\mathcal{S}^U}{\mathbb{E}}\left[(w^{u}\log[\phi_1^u(s^u)]+(1-w^{u})\log[\phi_0^u(s^u)]\right],
    \end{split}
\end{equation}
\noindent
where $\mathbf{W}^U = \{w^{u}\}$ denotes the point-wise probability for the uncollected triples being positive $q(y=1|s^u)$, $\mathbf{W}^L = \{w^{l}\}$ denotes the probability for the collected triples being positive $q(y=1|s^l)$.

We view $\mathbf{W}^U$ and $\mathbf{W}^L$ as free parameters and regularize them by $\mathbf{\Tilde{W}}^U$ and $\mathbf{\Tilde{W}}^L$, which are estimated posterior probability as follows:
 \begin{equation}
    \scriptsize
    \Tilde{w}^{l}_i = \frac{\beta\phi_1^l(s^l_i)}{\beta\phi_1^l(s^l_{i}) + (1-\beta)\phi_0^l(s^l_{i})},
\end{equation}

\begin{equation}
    \scriptsize
    \Tilde{w}^{u}_{ik} = \frac{\alpha\phi_1^u(s^u_{ik})}{\alpha\phi_1^u(s^u_{ik}) + (1-\alpha)\phi_0^u(s^u_{ik})}.
\end{equation}

Therefore, the KL term in Eq.~\eqref{eq:derive} becomes $\mathbb{KL}(\mathbf{W}^U\|\mathbf{\Tilde{W}}^U) + \mathbb{KL}(\mathbf{W}^L\|\mathbf{\Tilde{W}}^L)$.

Last but not least, the third term $\underset{p(\mathbf{S})}{\mathbb{E}}q(\mathbf{Y}|\mathbf{S}) = \|\mathbf{W}^U\|_1/|\mathcal{S}^U| + \|\mathbf{W}^L\|_1/|\mathcal{S}^L|$ regulates the total number of potential triples (including both collected ones and uncollected ones), because of the sparsity nature of graphs. Finally, we derive the lower bound of the log-likelihood, as shown in Eq~\eqref{eq:llc_ap}.
\end{proof}

\subsection{Datasets}
\label{ap:data}

\subsubsection{Dataset Information}

We evaluate our proposed model based on three widely used knowledge graphs and one Twitter ineraction graph:

\begin{itemize}[leftmargin = 15pt]
    \item \textbf{FB15K}~\cite{TransE} is a subset of Freebase~\cite{freebase}, a large database containing general knowledge facts with a variety of relation types;
    \item \textbf{FB15K-237} ~\cite{toutanova-etal-2015-representing} is a reduced version of FB15K, where inverse relations are removed;
    \item \textbf{WN18} ~\cite{TransE} is a subset of WorldNet~\cite{wordnet}, a massive lexical English database that captures semantic relations between words;
    \item \textbf{Twitter} is an interaction graph relevant with \textit{Russo-Ukrainian War}. Data is collected in the Twitter platform from May 1, 2022, to December 25, 2022, which records user-tweet interactions and user-hashtag interactions. Thus, the graph is formed by two relations (user-tweet and user-hashtag) and multiple entities, which can be categorized into three types (user, tweet, and hashtag). Following~\cite{DyDiff-VAE,wang2022rete}, when constructing the graph, thresholds will be selected to remove inactive users, tweets, and hashtags according to their occurrence frequency. We set the thresholds for the user, tweet, and hashtag as 30, 30, and 10, respectively, i.e., if a tweet has fewer than 30 interactions with users, it will be regarded as inactive and removed from the graph. Besides, the extremely frequent users and tweets are deleted as they may be generated by bots. 
\end{itemize}

For all datasets, we first merge the training set and validation set as a whole. Then we simulate noisy and incomplete graphs for the training process by adding various proportions of false negative/positive cases in the merged set. After that, we partition the simulated graphs into new train/valid sets with a ratio of $7:3$ and the test set remains the same. Table~\ref{tb:data} provides an overview of the statistics of the simulated datasets corresponding to various perturbation rates and based graphs.

\subsubsection{Dataset Perturbation}

Data perturbation aims to simulate noisy and incomplete graphs from clean benchmark knowledge graphs. It consists of two aspects: First, to simulate the \textit{false negative issue}, it randomly removes some existing links in a graph, considering the removed links as missing but potentially plausible facts. Second, to simulate the \textit{false positive issue}, it randomly adds spurious links to the graph as unreliable or outdated facts. 

We define perturbation rate, i.e., {\em ptb\_rate \/}, as a proportion of modified edges in a graph to control the amount of removing positive links and adding negative links. For example, if a graph has 100 links and the perturbation rate is 0.5, then we will randomly convert the positivity or negativity of 50 links. Among these 50 modified links, $10\%$ of them will be added and the rest of them will be removed, i.e., we will randomly add 5 negative links and remove 45 positive links to generate a perturbed graph. The perturbed graph can be regarded as noisy and incomplete, leading to significant performance degradation. In our experiments, we set {\em ptb\_rate \/} in a range of $\{0.1, 0.3, 0.5, 0.7\}$. The detailed perturbation process is summarized in Figure~\ref{fig:ptb}.

\begin{figure*}
    \centering
    \includegraphics[width = 0.8\linewidth]{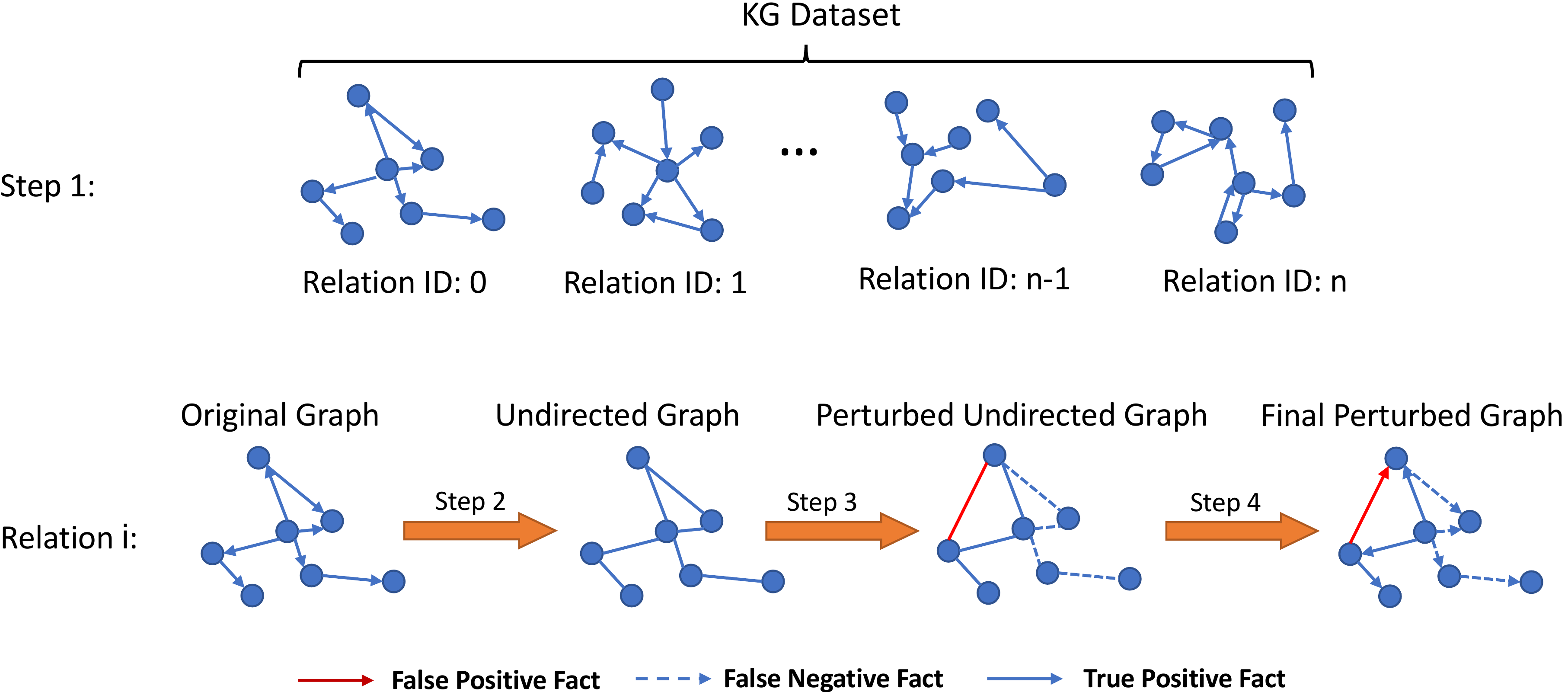}
    \caption{Summary of graph perturbation to construct noisy and incomplete KGs.}
    \label{fig:ptb}
\end{figure*}

\subsection{Baselines}
\label{ap:baseline}
We describe the baseline models utilized in the experiments in detail:
\begin{itemize}[leftmargin = 15pt]
    \item \textbf{TransE}\footnote{The experiments of TransE and TransR are implemented with \url{https://github.com/thunlp/OpenKE}.}~\cite{TransE} is a translation-based embedding model, where both entities and relations are represented as vectors in the latent space. The relation is utilized as a translation operation between the subject and the object entity;
    \item \textbf{TransR}~\cite{TransR}  advances TransE by optimizing modeling of n-n relations, where each entity embedding can be projected to hyperplanes defined by relations;
    \item \textbf{DistMult}\footnote{The experiments of DistMult, ComplEx, and RotatE are implemented with \url{https://github.com/DeepGraphLearning/KnowledgeGraphEmbedding}.}~\cite{DistMult} is a general framework with the bilinear objective for multi-relational learning that unifies most multi-relational embedding models;
    \item \textbf{ComplEx}~\cite{ComplEX} introduces complex embeddings, which can effectively capture asymmetric relations while retaining the efficiency benefits of the dot product;
    \item \textbf{RotatE}~\cite{RotatE} extends ComplEx by representing entities as complex vectors and relations as rotation operations in a complex vector space;
    
    \item \textbf{R-GCN}\footnote{\url{https://github.com/JinheonBaek/RGCN}}~\cite{R-GCN} uses relation-specific weight matrices that are defined as linear combinations of a set of basis matrices;
    \item \textbf{CompGCN}\footnote{\url{https://github.com/malllabiisc/CompGCN}}~\cite{CompGCN} is a framework for incorporating multi-relational information in graph convolutional networks to jointly embeds both nodes and relations in a graph;
    
    \item \textbf{UKGE}\footnote{\url{https://github.com/stasl0217/UKGE}}~\cite{UKGE} learns embeddings according to the confidence scores of uncertain relation facts to preserve both structural and uncertainty information of facts in the embedding space;
    
    \item \textbf{NSCaching}\footnote{\url{https://github.com/AutoML-Research/NSCaching}}~\cite{NSCaching} is an inexpensive negative sampling approach by using cache to keep track of high-quality negative triplets, which have high scores and rare;
    \item \textbf{SANS}\footnote{\url{https://github.com/kahrabian/SANS}}~\cite{SANS} utilizes the rich graph structure by selecting negative samples from a node’s k-hop neighborhood for negative sampling without  additional parameters and difficult adversarial optimization;
    
    \item \textbf{PUDA}\footnote{\url{https://github.com/lilv98/PUDA}}~\cite{PUDA} is a KGC method to circumvent the impact of the false negative issue by tailoring positive unlabeled risk estimator and address the data sparsity issue by unifying adversarial training and PU learning under the positive-unlabeled minimax game.

\end{itemize}

\subsection{Reproducibility}
\label{ap:implementation}
\subsubsection{Baseline Setup}
All baseline models and \model are trained on the perturbed training set and validated on the perturbed valid set. We utilize \textit{MRR} on the valid set to determine the best models and evaluate them on the clean test set. For uncertain knowledge graph embedding methods UKGE~\cite{UKGE}, since the required uncertainty scores are unavailable, we set the scores for triples in training set as 1 and 0 otherwise. The predicted uncertainty scores produced by UKGE are utilized to rank the potential triples for ranking evaluation. We train all baseline models and \model on the same GPUs (GeForce RTX 3090) and CPUs (AMD Ryzen Threadripper 3970X 32-Core Processor).
 
\subsubsection{\model Setup}
For model training, we utilize Adam optimizer and set the maximum number of epochs as $200$. Within the first $50$ epochs, we disable self-training and focus on learning suboptimal model parameters on noisy and incomplete data. After that, we start the self-training strategy, where the latest label posterior estimation is utilized to sample neighbors for the encoder and select informative unlabeled samples for model training. We set batch size as $256$, the dimensions of all embeddings as $128$, and the dropout rate as $0.5$. For the sake of efficiency, we employ $1$ neighborhood aggregation layer in the encoder. 

For the setting of hyperparameter, we mainly tune positive class prior $\alpha$ in the range of $\{1e-1,5e-2,1e-2,5e-3,1e-3,5e-4,1e-4,5e-5\}$; true positive ratio $\beta$ in the range of $\{0.3,0.2,0.1,0.005,0.001\}$; learning rate in the range of $\{0.02,0.01,0.005,0.001,0.0005\}$; the number of sampled unlabeled triples for each labeled one in the range of $\{50,40,30,20,10\}$. We will publicly release our code and data upon acceptance. 

 \begin{table}[t]
\caption{Overall performance on noisy and incomplete Twitter data, with {\em ptb\_rate = 0.3\/}. Average results on $5$ independent runs are reported. $*$ indicates the statistically significant results over baselines, with $p$-value $<0.01$. The best results are in boldface, and the strongest baseline performance is underlined.}
\label{tb:twitter}
\small
\centering
\resizebox{1.0\linewidth}{!}{
\begin{tabular}{ccccc}
\toprule
\multicolumn{1}{c|}{\textbf{Dataset}} & \multicolumn{4}{c}{\textbf{Twitter}} \\ \hline
\multicolumn{1}{c|}{\textbf{Metrics}} & \textbf{MRR} & \textbf{HIT@100} & \textbf{HIT@50} & \textbf{HIT@30} \\ \hline
\multicolumn{1}{c|}{\textbf{Random}} & 0.001 & 0.006 & 0.003 & 0.002 \\ \hline
\multicolumn{5}{c}{\textit{Knowledge graph embedding methods}} \\ \hline
\multicolumn{1}{c|}{\textbf{TransE}} & 0.010 & 0.091 & 0.058 & 0.041 \\
\multicolumn{1}{c|}{\textbf{TransR}} & 0.009 & 0.078 & 0.048 & 0.033 \\
\multicolumn{1}{c|}{\textbf{DistMult}} & 0.021 & 0.091 & 0.065 & 0.052 \\
\multicolumn{1}{c|}{\textbf{ComplEx}} & {\ul 0.022} & 0.089 & 0.064 & 0.051 \\
\multicolumn{1}{c|}{\textbf{RotatE}} & {\ul 0.022} & {\ul 0.1115} & {\ul 0.077} & {\ul 0.059} \\ \hline
\multicolumn{5}{c}{\textit{Graph neural network methods on KG}} \\ \hline
\multicolumn{1}{c|}{\textbf{RGCN}} & 0.005 & 0.054 & 0.029 & 0.019 \\
\multicolumn{1}{c|}{\textbf{CompGCN}} & 0.014 & 0.089 & 0.059 & 0.044 \\ \hline
\multicolumn{5}{c}{\textit{Uncertain knowledge graph embedding method}} \\ \hline
\multicolumn{1}{c|}{\textbf{UKGE}} & 0.011 & 0.072 & 0.053 & 0.033 \\ \hline
\multicolumn{5}{c}{\textit{Negative sampling methods}} \\ \hline
\multicolumn{1}{c|}{\textbf{NSCaching}} & 0.012 & 0.095 & 0.060 & 0.043 \\
\multicolumn{1}{c|}{\textbf{SANS}} & 0.019 & 0.104 & 0.070 & 0.054 \\ \hline
\multicolumn{5}{c}{\textit{Positive-Unlabeled learning method}} \\ \hline
\multicolumn{1}{c|}{\textbf{PUDA}} & 0.013 & 0.082 & 0.057 & 0.044 \\ \hline
\multicolumn{1}{c|}{\textbf{nPUGraph}} & \textbf{0.030*} & \textbf{0.127*} & \textbf{0.096*} & \textbf{0.074*} \\ \hline
\rowcolor{LightCyan} \multicolumn{1}{c|}{\textbf{Gains ($\%$)}} & \textit{38.2} & \textit{13.9} & \textit{25.3} & \textit{26.5} \\ \bottomrule
\end{tabular}}
\end{table}

\begin{table*}[t]
\caption{Overall performance on noisy and incomplete graphs, with {\em ptb\_rate = 0.1\/}. Average results on $5$ independent runs are reported. $*$ indicates the statistically significant results over baselines, with $p$-value $<0.01$. The best results are in boldface, and the strongest baseline performance is underlined.}
\label{tb:0.1Result}
\small
\centering
\resizebox{0.9\textwidth}{!}{
\fontsize{8.5}{11}\selectfont
\begin{tabular}{c|cccc|cccc|cccc}
\toprule
\textbf{Dataset} & \multicolumn{4}{c|}{\textbf{FB15K}} & \multicolumn{4}{c|}{\textbf{FB15K-237}} & \multicolumn{4}{c}{\textbf{WN18}} \\ \hline
\textbf{Metrics} & \textbf{MRR} & \textbf{H@10} & \textbf{H@3} & \textbf{H@1} & \textbf{MRR} & \textbf{H@10} & \textbf{H@3} & \textbf{H@1} & \textbf{MRR} & \textbf{H@10} & \textbf{H@3} & \textbf{H@1} \\ \hline
\multicolumn{13}{c}{ \textit{Knowledge graph embedding methods}} \\ \hline
\textbf{TransE} & 0.419 & 0.666 & 0.507 & 0.280 & 0.245 & 0.441 & 0.284 & 0.144 & 0.318 & 0.646 & 0.579 & 0.044 \\
\textbf{TransR} & 0.398 & 0.658 & 0.494 & 0.250 & 0.246 & 0.434 & 0.280 & 0.153 & 0.319 & 0.646 & 0.575 & 0.051 \\
\textbf{DistMult} & 0.516 & 0.719 & 0.578 & 0.407 & 0.271 & 0.446 & 0.297 & 0.185 & 0.524 & 0.686 & 0.610 & 0.418 \\
\textbf{ComplEx} & 0.503 & 0.711 & 0.570 & 0.390 & 0.276 & 0.459 & 0.305 & 0.186 & 0.612 & {\ul 0.702} & 0.643 & 0.560 \\
\textbf{RotatE} & {\ul 0.544} & {\ul 0.732} & {\ul 0.608} & {\ul 0.441} & 0.292 & {\ul 0.480} & {\ul 0.324} & 0.199 & 0.613 & 0.691 & 0.642 & 0.567 \\ \hline
\multicolumn{13}{c}{ \textit{Graph neural network methods on KG}} \\\hline
\textbf{RGCN} & 0.196 & 0.372 & 0.209 & 0.110 & 0.169 & 0.317 & 0.177 & 0.097 & 0.483 & 0.625 & 0.554 & 0.396 \\
\textbf{CompGCN} & 0.460 & 0.677 & 0.525 & 0.343 & {\ul 0.293} & 0.475 & {\ul 0.324} & {\ul 0.203} & 0.608 & 0.686 & 0.636 & 0.564 \\ \hline
\multicolumn{13}{c}{ \textit{Uncertain knowledge graph embedding method}} \\\hline
\textbf{UKGE} & 0.338 & 0.425 & 0.321 & 0.233 & 0.231 & 0.411 & 0.204 & 0.110 & 0.381 & 0.541 & 0.407 & 0.331 \\ \hline
\multicolumn{13}{c}{ \textit{Negative sampling method}} \\\hline
\textbf{NSCaching} & 0.495 & 0.689 & 0.557 & 0.390 & 0.153 & 0.305 & 0.167 & 0.080 & 0.434 & 0.542 & 0.470 & 0.374 \\
\textbf{SANS} & 0.422 & 0.649 & 0.493 & 0.298 & 0.271 & 0.453 & 0.301 & 0.182 & {\ul 0.619} & {\ul 0.702} & {\ul 0.644} & {\ul 0.574} \\ \hline
\multicolumn{13}{c}{ \textit{Positive-Unlabeled learning method}} \\\hline
\textbf{PUDA} & 0.493 & 0.713 & 0.559 & 0.377 & 0.271 & 0.443 & 0.298 & 0.185 & 0.520 & 0.667 & 0.608 & 0.419 \\ \hline
\textbf{nPUGraph} & \textbf{0.561*} & \textbf{0.791*} & \textbf{0.621*} & \textbf{0.449*} & \textbf{0.328*} & \textbf{0.535*} & \textbf{0.343*} & \textbf{0.221*} & \textbf{0.630*} & \textbf{0.754*} & \textbf{0.671*} & \textbf{0.599*} \\ \hline
\rowcolor{LightCyan} \textbf{Gains ($\%$)} & \textit{3.0} & \textit{8.1} & \textit{2.1} & \textit{1.9} & \textit{12.0} & \textit{11.4} & \textit{5.9} & \textit{8.7} & \textit{1.8} & \textit{7.4} & \textit{4.1} & \textit{4.4} \\ \bottomrule
\end{tabular}}
\end{table*}

\begin{table*}[t]
\caption{Experimental results under {\em ptb\_rate = 0.5\/}.}
\label{tb:0.5Result}
\small
\centering
\resizebox{0.9\textwidth}{!}{
\fontsize{8.5}{11}\selectfont
\begin{tabular}{c|cccc|cccc|cccc}
\toprule
\textbf{Dataset} & \multicolumn{4}{c|}{\textbf{FB15K}} & \multicolumn{4}{c|}{\textbf{FB15K-237}} & \multicolumn{4}{c}{\textbf{WN18}} \\ \hline
\textbf{Metrics} & \textbf{MRR} & \textbf{H@10} & \textbf{H@3} & \textbf{H@1} & \textbf{MRR} & \textbf{H@10} & \textbf{H@3} & \textbf{H@1} & \textbf{MRR} & \textbf{H@10} & \textbf{H@3} & \textbf{H@1} \\ \hline
\multicolumn{13}{c}{ \textit{Knowledge graph embedding methods}} \\ \hline
\textbf{TransE} & 0.279 & 0.540 & 0.363 & 0.136 & 0.151 & 0.342 & 0.190 & 0.053 & 0.158 & 0.337 & 0.285 & 0.018 \\
\textbf{TransR} & 0.256 & 0.500 & 0.323 & 0.127 & 0.141 & 0.294 & 0.160 & 0.064 & 0.150 & 0.327 & 0.267 & 0.020 \\
\textbf{DistMult} & 0.328 & 0.536 & 0.372 & 0.224 & 0.210 & 0.366 & 0.228 & 0.133 & 0.279 & 0.370 & 0.319 & 0.224 \\
\textbf{ComplEx} & 0.322 & 0.525 & 0.365 & 0.220 & 0.201 & 0.358 & 0.220 & 0.123 & 0.314 & 0.373 & 0.335 & {\ul 0.280} \\
\textbf{RotatE} & 0.350 & 0.547 & 0.398 & 0.249 & {\ul 0.227} & {\ul 0.387} & {\ul 0.246} & {\ul 0.149} & 0.307 & 0.377 & 0.333 & 0.266 \\ \hline
\multicolumn{13}{c}{ \textit{Graph neural network methods on KG}} \\\hline
\textbf{RGCN} & 0.134 & 0.271 & 0.142 & 0.065 & 0.116 & 0.234 & 0.117 & 0.058 & 0.253 & 0.323 & 0.287 & 0.209 \\
\textbf{CompGCN} & {\ul 0.378} & {\ul 0.600} & {\ul 0.429} & {\ul 0.266} & 0.223 & 0.377 & 0.240 & {\ul 0.149} & {\ul 0.345} & 0.315 & 0.336 & 0.279 \\ \hline
\multicolumn{13}{c}{ \textit{Uncertain knowledge graph embedding method}} \\\hline
\textbf{UKGE} & 0.257 & 0.299 & 0.213 & 0.088 & 0.143 & 0.284 & 0.172 & 0.053 & 0.228 & 0.297 & 0.210 & 0.115 \\ \hline
\multicolumn{13}{c}{ \textit{Negative sampling method}} \\\hline
\textbf{NSCaching} & 0.272 & 0.454 & 0.310 & 0.179 & 0.176 & 0.297 & 0.190 & 0.115 & 0.123 & 0.182 & 0.133 & 0.093 \\
\textbf{SANS} & 0.335 & 0.545 & 0.387 & 0.224 & 0.225 & 0.382 & 0.244 & 0.147 & 0.313 & {\ul 0.379} & {\ul 0.337} & 0.275 \\ \hline
\multicolumn{13}{c}{ \textit{Positive-Unlabeled learning method}} \\\hline
\textbf{PUDA} & 0.329 & 0.525 & 0.369 & 0.229 & 0.202 & 0.347 & 0.217 & 0.131 & 0.231 & 0.329 & 0.264 & 0.178 \\ \hline
\textbf{nPUGraph} & \textbf{0.417*} & \textbf{0.663*} & \textbf{0.470*} & \textbf{0.291*} & \textbf{0.258*} & \textbf{0.433*} & \textbf{0.285*} & \textbf{0.171*} & \textbf{0.373*} & \textbf{0.443*} & \textbf{0.379*} & \textbf{0.327*} \\ \hline
\rowcolor{LightCyan} \textbf{Gains ($\%$)} & \textit{10.4} & \textit{10.5} & \textit{9.6} & \textit{9.6} & \textit{13.9} & \textit{12.0} & \textit{16.1} & \textit{14.8} & \textit{8.2} & \textit{16.9} & \textit{12.6} & \textit{16.9} \\ \bottomrule
\end{tabular}}
\end{table*}

\begin{table*}[t]
\caption{Experimental results under {\em ptb\_rate = 0.7\/}.}
\label{tb:0.7Result}
\small
\centering
\resizebox{0.9\textwidth}{!}{
\fontsize{8.5}{11}\selectfont
\begin{tabular}{c|cccc|cccc|cccc}
\toprule
\textbf{Dataset} & \multicolumn{4}{c|}{\textbf{FB15K}} & \multicolumn{4}{c|}{\textbf{FB15K-237}} & \multicolumn{4}{c}{\textbf{WN18}} \\ \hline
\textbf{Metrics} & \textbf{MRR} & \textbf{H@10} & \textbf{H@3} & \textbf{H@1} & \textbf{MRR} & \textbf{H@10} & \textbf{H@3} & \textbf{H@1} & \textbf{MRR} & \textbf{H@10} & \textbf{H@3} & \textbf{H@1} \\ \hline
\multicolumn{13}{c}{ \textit{Knowledge graph embedding methods}} \\ \hline
\textbf{TransE} & 0.219 & 0.457 & 0.294 & 0.087 & 0.104 & 0.273 & 0.128 & 0.020 & 0.114 & 0.229 & 0.199 & 0.020 \\
\textbf{TransR} & 0.195 & 0.396 & 0.248 & 0.087 & 0.096 & 0.217 & 0.108 & 0.036 & 0.096 & 0.207 & 0.167 & 0.016 \\
\textbf{DistMult} & 0.250 & 0.425 & 0.282 & 0.163 & 0.173 & 0.308 & 0.186 & 0.105 & 0.181 & 0.240 & 0.211 & 0.143 \\
\textbf{ComplEx} & 0.241 & 0.408 & 0.271 & 0.157 & 0.158 & 0.290 & 0.170 & 0.092 & 0.202 & 0.243 & 0.219 & 0.178 \\
\textbf{RotatE} & 0.271 & 0.439 & 0.309 & 0.185 & 0.198 & 0.337 & 0.212 & 0.129 & 0.192 & 0.250 & 0.212 & 0.159 \\ \hline
\multicolumn{13}{c}{ \textit{Graph neural network methods on KG}} \\\hline
\textbf{RGCN} & 0.129 & 0.229 & 0.134 & 0.075 & 0.086 & 0.178 & 0.086 & 0.040 & 0.166 & 0.215 & 0.191 & 0.135 \\
\textbf{CompGCN} & {\ul 0.354} & {\ul 0.578} & {\ul 0.402} & {\ul 0.243} & 0.193 & 0.325 & 0.204 & 0.129 & {\ul 0.212} & {\ul 0.262} & {\ul 0.229} & {\ul 0.183} \\ \hline
\multicolumn{13}{c}{ \textit{Uncertain knowledge graph embedding method}} \\\hline
\textbf{UKGE} & 0.186 & 0.358 & 0.199 & 0.076 & 0.133 & 0.201 & 0.115 & 0.075 & 0.099 & 0.176 & 0.153 & 0.116 \\ \hline
\multicolumn{13}{c}{ \textit{Negative sampling method}} \\\hline
\textbf{NSCaching} & 0.174 & 0.309 & 0.194 & 0.105 & 0.157 & 0.276 & 0.169 & 0.099 & 0.057 & 0.085 & 0.062 & 0.041 \\
\textbf{SANS} & 0.292 & 0.478 & 0.332 & 0.196 & {\ul 0.201} & {\ul 0.342} & {\ul 0.216} & {\ul 0.131} & 0.202 & 0.254 & 0.222 & 0.171 \\ \hline
\multicolumn{13}{c}{ \textit{Positive-Unlabeled learning method}} \\\hline
\textbf{PUDA} & 0.254 & 0.421 & 0.283 & 0.170 & 0.165 & 0.291 & 0.176 & 0.104 & 0.109 & 0.171 & 0.127 & 0.077 \\ \hline
\textbf{nPUGraph} & \textbf{0.365*} & \textbf{0.600*} & \textbf{0.427*} & \textbf{0.277*} & \textbf{0.243*} & \textbf{0.390*} & \textbf{0.266*} & \textbf{0.155*} & \textbf{0.247*} & \textbf{0.303*} & \textbf{0.257*} & \textbf{0.209*} \\ \hline
\rowcolor{LightCyan} \textbf{Gains ($\%$)} & \textit{3.0} & \textit{3.8} & \textit{6.3} & \textit{13.9} & \textit{20.9} & \textit{14.1} & \textit{23.0} & \textit{18.5} & \textit{16.8} & \textit{15.5} & \textit{12.3} & \textit{14.4} \\ \bottomrule
\end{tabular}}
\end{table*}
\subsection{Experiments}
\subsubsection{Experimental Results on Twitter Data}
\label{ap:twitter}

We discuss the model performance on noisy and incomplete Twitter data with {\em ptb\_rate = 0.3\/} in this section, which is shown in Table~\ref{tb:twitter}. According to the result of Random, we can infer that all relations on Twitter are $n-n$, where relations can be $1-n$, $n-1$, and $n-n$ for benchmark KG. Therefore, link prediction is more challenging for Twitter data and we adopt Hits at ${30, 50, 100}$ as evaluation metrics, instead. 

For Twitter data, \model achieves impressive performance compared with the baseline models, with $25.98\%$ relative improvement on average. The results of Twitter data support the robustness of \model, which can mitigate false negative/positive issues not only in benchmark KG but also in the real-world social graph.

\subsubsection{Experimental Results under Different Perturbation Rates}
\label{ap:allresult}
The experimental results under perturbation rates 0.1, 0.5, and 0.7 are shown in Table \ref{tb:0.1Result}, Table \ref{tb:0.5Result}, and Table \ref{tb:0.7Result}, respectively. \model outperforms all baseline models for various perturbation rates, demonstrating that \model can mitigate false negative/positive issues on knowledge graphs with different degrees of noise and incompleteness. Notably, comparing these three tables, the relative improvements are more significant under higher {\em ptb\_rate \/} in most cases, showing stronger robustness for \model on graphs with more false negative/positive facts. 




\end{document}